\documentclass[preprint]{article}

\usepackage{neurips_2026}

\title{Constrained Contextual Bandits with Adversarial Contexts}

\usepackage[utf8]{inputenc} 
\usepackage[T1]{fontenc}    
\usepackage{hyperref}       
\usepackage{url}            
\usepackage{booktabs}       
\usepackage{amsfonts}       
\usepackage{nicefrac}       
\usepackage{microtype}      
\usepackage{xcolor}         
\usepackage{times}
\usepackage{dsfont}
\usepackage{amssymb}

\usepackage{tabularx} 
\usepackage{booktabs}
\usepackage{makecell}

\newcounter{commentcounter}
\newcommand{\cmt}[1]{%
  \refstepcounter{commentcounter}%
  \textcolor{red}{\textbf{[(\thecommentcounter) AS: #1]}}%
}
\newcommand{\Regret}{$\mathsf{Regret}$}
\newcommand{\CCV}{$\mathsf{CCV}$}
\newcommand{\CCB}{$\mathsf{CCB}$}
\newcommand{\IGW}{$\mathsf{IGW}$}
\newcommand{\CB}{$\mathsf{CB}$}
\newcommand{\SqCB}{$\mathsf{SquareCB}$}
\newcommand{\OPT}{$\mathsf{OPT}$}
\newcommand{\cbwk}{$\mathsf{CBwK}$}
\newcommand{\cbwlc}{$\mathsf{CBwLC}$}

\usepackage{bm}
\usepackage{nicefrac} 
\usepackage{enumitem}
\usepackage{booktabs}

\usepackage{amsmath}
\usepackage{amssymb}
\usepackage{amsmath}
\usepackage{amsthm}
\usepackage{cleveref}
\usepackage{autonum}
\usepackage{MnSymbol} 

\usepackage{caption}
\usepackage{url}
\usepackage{lettrine}
\usepackage{framed}
\usepackage{color}
\usepackage{mathtools}

\usepackage{algorithm,algpseudocode}

\algnewcommand{\algorithmicforeach}{\textbf{for each}}
\algdef{SE}[FOR]{ForEach}{EndForEach}[1]
  {\algorithmicforeach\ #1\ \algorithmicdo}
  {\algorithmicend\ \algorithmicforeach}

\usepackage{overpic}
\usepackage{dsfont}
\usepackage{booktabs} 
\usepackage{bm}
\usepackage{bbm}
\usepackage{enumitem}
\usepackage{ifthen}
\usepackage{graphicx}
\usepackage{subcaption}
\usepackage{dsfont}
\usepackage{framed}
\usepackage{overpic}
\usepackage[svgnames]{xcolor}
\newtheorem{theorem}{Theorem}
\usepackage{apxproof}
\newtheorem{proposition}{Proposition}
\newtheorem{remark}{Remark}

\newtheorem{definition}{Definition}
\newtheorem{lemma}{Lemma}
\newtheorem{assumption}{Assumption}
\newtheorem{corollary}{Corollary}[theorem]
\hypersetup{
	colorlinks=true,
	linkcolor=blue,
	urlcolor=red,
	citecolor=DarkBlue,
	linkbordercolor={0 0 1}
}

\let\geq\geqslant
\let\leq\leqslant
\let\ge\geqslant
\let\le\leqslant

\mathcode`*="8000
{\catcode`*=\active
\gdef*{\star}}

\author{%
  Dhruv Sarkar \\
  Indian Institute of Technology Kharagpur, India \\
  \texttt{dhruv.sarkar223@gmail.com}
  \And
  Abhishek Sinha  \\
  Tata Institute of Fundamental Research, Mumbai, India \\
  \texttt{abhishek.sinha@tifr.res.in}, 
  }

\begin{document}

\maketitle

\begin{abstract}
We study budget-constrained contextual bandits with adversarial contexts, where each action yields a random reward and incurs a random cost. We adopt the standard realizability assumption: conditioned on the observed context, rewards and costs are drawn independently from fixed distributions whose expectations belong to known function classes. We focus on the continuing setting, in which the algorithm operates over the entire horizon even after the budget for cumulative cost is exhausted. In this setting, the objective is to simultaneously control regret and the violation of the budget constraint. Building on the seminal $\mathsf{SquareCB}$ framework of \citet{foster2018practical}, we propose a simple and modular framework that leverages online regression oracles to reduce the constrained problem to a standard unconstrained contextual bandit problem with adaptively defined surrogate reward functions. In contrast to prior works, which focus on stochastic contexts, our reduction yields improved guarantees for more general adversarial contexts, together with an efficient algorithm with a compact and transparent analysis. 
\end{abstract}

\section{Introduction and Related Works}

Contextual bandits (\CB) provide a fundamental framework for sequential decision-making with side information, with applications ranging from recommendation systems to clinical trials \citep{beygelzimer2010optimal,agarwal2014taming}. They can be viewed as a natural generalization of the classical stochastic Multi-Armed Bandit (MAB) problem where some additional side information conveying an implicit actions-to-rewards mapping is made available to the learner in the form of \emph{contexts}. 
In many practical settings, however, decisions must satisfy additional long-term constraints, such as budget, safety, or fairness requirements. This has led to an extensive literature on \emph{Constrained Contextual Bandits} (\CCB), where the learner must simultaneously maximize reward and control cumulative constraint violation \citep{badanidiyuru2014resourceful, agrawal2014bandits, castiglioni2022online}.

Most existing work on \CCB~relies critically on stochastic assumptions on the context sequence, enabling techniques based on concentration arguments \citep{guo2024stochastic, guo2025stochastic, han2023optimal}. While these approaches yield strong guarantees, they break down in non-stationary or adversarial environments, where contexts may evolve arbitrarily or depend on the learner’s past actions \citep{harris2024regret,hu2025learning}. Extending \CCB~to adversarial contexts therefore remains a central challenge requiring new design principles and analytical techniques \citep[Remark 2.2]{slivkins2023contextual}. Please see Section \ref{related_works} in the Appendix for a detailed discussion on the related works. 

In this paper, we go beyond the stochastic context assumption. We develop a black-box reduction scheme that converts the constrained problem into an unconstrained contextual bandit with \emph{adaptively defined} surrogate rewards. Our framework simultaneously handles different variations including round-wise feasibility, contextual bandits with knapsacks (\cbwk) \citep{badanidiyuru2018bandits, immorlica2022adversarial} and contextual bandits with linear constraints (\cbwlc) \citep{slivkins2023contextual} and improves upon the state-of-the-art results. The \SqCB~algorithm, proposed by \citet{foster2020beyond}, reduces the unconstrained contextual bandit problem to a simpler online regression problem, thus obviating the need for maintaining confidence intervals (common with UCB-type algorithms) which could be difficult to construct for many non-parametric function classes. Our algorithm builds on the \SqCB~framework and incorporates constraint handling through a Lyapunov-based construction, resulting in an oracle-efficient algorithm that naturally balances exploration, exploitation, and constraint satisfaction. Our key technical contribution is a general regret decomposition inequality for the constrained setting (Proposition \ref{reg-decomp-proposition}) that simultaneously controls regret and cumulative constraint violation (\CCV). This \emph{single} inequality cleanly separates the roles of exploration (via inverse gap weighting), estimation (via regression oracles), and constraint management (via Lyapunov dynamics), and serves as the foundation for all our results under different assumptions on the benchmark (see Theorem \ref{thm:main}).

Leveraging this framework, we obtain an efficient policy yielding sharp bounds for both regret and \CCV~for adversarial contexts across a wide range of feasibility regimes. These results improve over prior  guarantees that rely on stochastic contexts, and apply more broadly without requiring assumptions such as known context distributions \citep{guo2025stochastic}, large budgets \citep{slivkins2022efficient}, or strict feasibility \citep{guo2024stochastic}. We also establish a converse result for the \cbwk~problem (Theorem \ref{thm:lower_bounds}).
The reduction extends naturally to the hard-stopping setting via  budget-scaling arguments. 
See Table \ref{tab:comparison} for a summary of key improvements over the state-of-the-art. In summary, we make the following contributions in this work:


   \textbf{1. Reduction via adaptive surrogate rewards:}
    We reduce the constrained problem to an unconstrained contextual bandit in a \emph{black-box} fashion by combining reward and cost estimates via a single Lyapunov-based surrogate reward, yielding a simple, oracle-efficient algorithm built on \SqCB.
    
   \textbf{2. A unifying \Regret -\CCV~decomposition:}
    We derive a single inequality, given in Proposition \ref{reg-decomp-proposition},  that simultaneously controls regret and cumulative constraint violation, from which all guarantees follow via a streamlined analysis.
    
  \textbf{3. Improved guarantees under adversarial contexts:}
    In Theorem \ref{thm:main}, we obtain $\tilde{O}(\sqrt{T})$-type bounds across multiple feasibility regimes, improving over prior $O(T^{3/4})$-type guarantees that additionally rely on stochastic contexts.
    
 \textbf{4. Generality and near-optimality:}
    Our framework applies uniformly to standard constraint benchmarks (including $\mathsf{CBwK}$ and $\mathsf{CBwLC}$) while relaxing restrictive assumptions, such as Slater's condition, known oracle error bounds, or large-budget regimes, and we show that logarithmic factors in \CCV~violation are information-theoretically unavoidable.
\section{Problem Formulation}
 We consider a budget-constrained contextual bandit problem under realizability assumptions. 
At the beginning of each round $t \geq 1$, the learner observes a context $x_t \in \mathcal{X},$ where $\mathcal{X}$ is the set of all possible contexts. The contexts could be chosen adversarially at each round. Upon observing the context $x_t$, the learner selects an action $a_t \in [K],$ possibly randomly, from the set of $K$ possible actions, also referred to as arms. The action $a_t$ could be randomized. Subsequently, the learner receives a random reward $f_t(x_t, a_t) \in [-1, 1]$ and incurs a random cost $g_t(x_t, a_t) \in [-1, 1]$.  Given the context $x_t$, the rewards and costs are assumed to be drawn independently from a distribution whose expected values are characterized below. 
\begin{assumption}[Realizability]
\label{assum:realizability}
Let $\mathcal{F}$ and $\mathcal{G}$ be two predefined function classes comprising functions that map each ($\mathsf{context}$, $\mathsf{action}$) pair to the interval $[-1,1].$  Then the realizability assumption states that there exist functions $f^\star \in \mathcal{F}$ and $g^\star \in \mathcal{G}$ such that $\mathbb{E}[f_t(x_t, a_t)|x_t=x, a_t=a] = f^\star(x,a)$ and $\mathbb{E}[g_t(x_t, a_t)|x_t=x, a_t=a] = g^\star(x,a), \forall x \in \mathcal{X}, a \in [K], t\geq 1$. 
\end{assumption}
$\mathcal{F}$ and $\mathcal{G}$ could be user-specified general function classes that may be flexibly implemented with, \emph{e.g.,} decision trees, kernels, neural nets, etc. 
 The ground truths $f^\star$ and $g^\star$ are not known \emph{a priori} and must be learned through past experience. A (randomized) policy $\pi: \mathcal{X} \times [T] \to \Delta([K])$ is a time-varying mapping which maps each context to a probability distribution over the actions. With slight abuse of notation, we will denote the probability of playing an action $a$ for the observed context $x$ at round $t$ by $\pi_t(a|x), a \in [K].$ Let $\Pi$ denote the set of all stationary randomized policies for which the mapping is independent of time. The goal of the learner is to perform as well as an optimal stationary policy $\pi^\star$ that maximizes the cumulative rewards while satisfying the cost constraints. In the sequel, we consider two different types of stationary benchmarks: 
\begin{enumerate}
	\item  (Round-wise feasible) Benchmarks satisfy the cost constraint at every round \citep{guo2025stochastic, slivkins2022efficient, sinha2024optimal}.
	\item  (Long-term feasible) Benchmarks satisfy a given cumulative budget constraint of $B_T$ over the entire horizon \citep{slivkins2023contextual, han2023optimal}.
\end{enumerate}
 In case of (1), the performance of an online policy is typically compared against the benchmark:
\begin{eqnarray} \label{problem_statement}
   \pi^\star = \arg \max_{\pi \in \Pi} \quad & \sum_{t=1}^T \mathbb{E}\big(f_t(x_t, a_t)\big), ~~~~
    \text{s.t.} \quad &  \mathbb{E} \big(g_t(x_t, a_t)\big) \leq 0, ~~ 1\leq t \leq T.
\end{eqnarray}
In case of (2), the performance of an online policy is typically compared against the benchmark: 
\begin{eqnarray} \label{problem_statement2}
	 \pi^\star=  \arg\max_{\pi \in \Pi} \quad & \sum_{t=1}^T \mathbb{E}\big(f_t(x_t, a_t)\big), ~~~~
    \text{s.t.} \quad &  \sum_{t=1}^T \mathbb{E} \big(g_t(x_t, a_t)\big) \leq B_T. 
\end{eqnarray}
In the above, the expectations are taken with respect to both the randomness of the environment and the policy. 
See Section \ref{benchmarks} and \ref{metrics} for precise definitions of different benchmarks and performance metrics.
\subsection{Online Regression Oracle ($\mathcal{O}_{\textrm{sq}}$)}

The learner interacts with the function classes $\mathcal{F}$ and $\mathcal{G}$ through the interface of an off-the-shelf online regression oracle $\mathcal{O}_{\mathrm{sq}}$ over $T$ rounds. At each round $t \geq 1$, the oracle takes the context $x_t$ as input and produces predictions for the reward and cost associated with each action. We denote the predicted vectors by $\big(\hat{f}_t(x_t, a), a \in [K]\big)$ and $\big(\hat{g}_t(x_t, a), a \in [K]\big)$, respectively. Let $a_t \in [K]$ denote the arm selected by any (possibly randomized) policy at round $t$, which results in a random reward $f_t(x_t, a_t)$ and a random cost $g_t(x_t, a_t)$. Under the realizability assumption, we have for all $(x_t, a_t):$
\begin{eqnarray} \label{realizability-implies}
\mathbb{E}\!\left[f_t(x_t, a_t)|x_t, a_t\right] = f^\star(x_t, a_t)
\quad \text{and} \quad
\mathbb{E}\!\left[g_t(x_t, a_t)|x_t, a_t\right] = g^\star(x_t, a_t).
\end{eqnarray}
The quality of the predictions produced by $\mathcal{O}_{\mathrm{sq}}$, measured in terms of cumulative squared loss, is assumed to satisfy the following guarantees for any sequence of contexts and actions:
\begin{equation} \label{oracle-guarantee}
 \sum_{t=1}^T \mathbb{E}\big(\hat{f}_t(x_t, a_t) - f^\star(x_t, a_t)\big)^2 \leq U_T,
\qquad
 \sum_{t=1}^T \mathbb{E}\big(\hat{g}_t(x_t, a_t) - g^\star(x_t, a_t)\big)^2 \leq U_T,
\end{equation}
where the error bound $U_T$ grows sub-linearly with $T$ \citep[Definition 3]{foster2023foundations}\footnote{Note that although at each round, the oracle produces estimates for all actions, in \eqref{oracle-guarantee}, its quality is measured only with respect to the action taken by the policy $a_t,$ which may, in turn, depend on the estimated values of all actions. }. In \eqref{oracle-guarantee}, the expectations are taken with respect to the randomness of the predictions $\{\hat{f}_t, \hat{g}_t\}_{t\geq 1}$ of the regression oracle $\mathcal{O}_{\mathrm{sq}}$ and the randomness of the environment. Under Assumption \ref{assum:realizability}, the bounds in \eqref{oracle-guarantee} can be achieved by any no-regret online learning algorithm with the squared loss function competing against the respective function classes; see \citep[Lemma~6]{foster2023foundations}. The value of $U_T$ depends on the complexity of the function classes. For example, if both $\mathcal{F}$ and $\mathcal{G}$ are finite and the oracle is implemented using the Exponential Weights algorithm, then we have $U_T = O\!\left(\log\!\big(\max\{|\mathcal{F}|, |\mathcal{G}|\}\big)\right)$ \citep[Proposition~3]{foster2023foundations}. Similarly, for $d$-dimensional linear function classes, using the classic Vovk--Azoury--Warmuth forecaster or Online Newton Step (ONS) yields
$U_T = O(d \log T)$ 
under standard regularity conditions \citep[Theorem~7.34]{orabona2019modern}.

In practice, the regression oracle may be implemented using an artificial neural network trained with gradient descent. Throughout the remainder of the paper, we treat $\mathcal{O}_{\mathrm{sq}}$ as black box and focus on designing the online learner.

\subsection{Offline Benchmarks} \label{benchmarks}
In this paper, we consider several classes of offline benchmark policies used to measure the performance of the online policy. These benchmarks differ in how they enforce the budget constraints.
\begin{definition}[Round-wise feasible in expectation]
\label{assum:in_expect}
  A stationary policy $\pi^\star : \mathcal{X} \to \Delta_K$ is called \emph{feasible in expectation} if $\pi^\star$ incurs non-positive cost in expectation at every round, \emph{i.e.,} 
$\mathbb{E}_{a \sim \pi^\star(\cdot|x_t)} g_t(x_t, a) \leq 0, ~~ \forall x_t, t.$

\end{definition}

\begin{definition}[Round-wise feasible in expectation with Slater's condition]
\label{assum:in_expect-slater}
  A stationary policy $\pi^\star : \mathcal{X} \to \Delta_K$ is called \emph{feasible in expectation} with Slater parameter $\epsilon>0$ if $\pi^\star$ incurs non-positive cost in expectation every round with an $\epsilon$ slack, \emph{i.e.,} 
$\mathbb{E}_{a \sim \pi^\star(\cdot|x_t)} g_t(x_t, a) \leq -\epsilon, ~~ \forall x_t, t.$

\end{definition}

\begin{definition}[Almost-surely round-wise feasible]
\label{assum:almost_sure}
    A stationary policy $\pi^\star : \mathcal{X} \to \Delta_K$ is called \emph{almost surely feasible} if $\pi^\star$ incurs non-positive cost almost surely every round, \emph{i.e.,} $\pi^\star(a|x_t) >0 \implies g_t(x_t, a) \leq 0, ~~\forall a, x_t, t.$ 
\end{definition}
\begin{definition}[Long-term feasible]
\label{assum:lt_budget}
    A stationary policy $\pi^\star : \mathcal{X} \to \Delta_K$ is called \emph{long-term budget feasible} for a total budget of $B_T \geq 0$ if 
$\sum_{t=1}^T \mathbb{E}_{a \sim \pi^\star(\cdot|x_t)} g_t(x_t, a) \leq B_T.$
In other words, a long-term budget-feasible benchmark satisfies a given cost constraint in expectation over the entire horizon. 
\end{definition}

\begin{remark}
    Clearly, the relative strengths of the benchmarks are related as follows:
    Long-term Feasible $\supseteq$ Round-wise Feasible in Expectation $\supseteq$ Almost surely Round-wise Feasible. We also have Round-wise Feasible in Expectation $\supseteq$ Round-wise Feasible in Expectation with Slater's condition. We will see that, as expected, relatively weaker benchmarks lead to stronger performance guarantees. 
\end{remark}

\begin{remark}
	Excepting Definition \ref{assum:in_expect-slater}, the non-emptiness of the rest of the above benchmark classes can be ensured for any problem by assuming the existence of a $\mathsf{NULL}$ arm which yields zero reward and zero cost for any context. 
\end{remark}



\subsection{Performance Metrics} \label{metrics} 
As standard in the online learning literature, we measure the performance of any online policy against a stationary benchmark $\pi^\star$ that knows the ground truths $f^\star$ and $g^\star$ and satisfies the budget constraints. The sub-optimality gap of the online policy in terms of the cumulative reward and constraint violation is captured by two metrics, \Regret ~and \CCV, as defined next.  
\paragraph{Regret:} Fix a context sequence $x_{1:T}$. 
The regret of an online policy $\pi$ is defined as:
$ \mathsf{Regret}_T :=  \left[\sum_{t=1}^T \mathbb{E}_{a \sim \pi^*} f_t(x_t, a)\right]  -   \left[\sum_{t=1}^T  f_t(x_t, a_t)\right],$
where $\pi^*$ is given by either Eqn.\ \eqref{problem_statement} or \eqref{problem_statement2} depending on whether the benchmark is round-wise or long-term feasible.  
\paragraph{Cumulative Constraint Violation (\CCV):} An online policy may not be exactly budget-feasible as the ground truths are unknown. 
With a round-wise feasible benchmark, the \CCV~of an online policy is defined as:
 $\mathsf{CCV}_T := \left[\sum_{t=1}^T g_t(x_t, a_t)\right]. $
Similarly, with a long-term feasible benchmark with a budget constraint of $B_T$, the \CCV~of the online policy is naturally defined as  $\mathsf{CCV}_T := \left[\sum_{t=1}^T  g_t(x_t, a_t)\right] - B_T.$  
Positive values of \CCV~capture the extent to which the constraints are violated by the online policy in the long run. 

 In the continuing setting, which is our primary focus, our objective is to design online policies that minimize the expectations of \Regret~and \CCV~simultaneously. In the hard-stopping setting, discussed in Appendix \ref{hard-stopping}, no budget violation is allowed, and our objective is to only minimize the expected regret. 

\section{Preliminaries} \label{algo_analysis}
Our algorithm builds upon the seminal \SqCB~framework of \citet{foster2020beyond}, originally developed for standard contextual bandits without any constraints. The (loss version of) vanilla \SqCB~subroutine is summarized in Algorithm~\ref{ccb}. It employs an online regression oracle $\mathcal{O}_{\text{sq}}$ to estimate each arm's losses from observed contexts, and then feeds these estimates into the classic Inverse Gap Weighting (\IGW) policy (line \ref{igw-line}). The \IGW~policy, formally given in Definition \ref{igw-def}, carefully balances exploration, exploitation, and estimation error, thereby achieving favourable regret guarantees.

\begin{algorithm}[H]
\caption{\SqCB: Contextual Bandits with Regression Oracles}
\label{ccb}
\begin{algorithmic}[1]
  \Require Online regression oracle $\mathcal{O}_{\text{sq}}$ and parameter $\gamma > 0$

  \For{$t = 1, \dots, T$}
    \State Receive context $x_t$.
    \State Ask $\mathcal{O}_{\text{sq}}$ to predict the loss for each action,
           obtaining $\widehat{l}_t(1), \dots, \widehat{l}_t(K)$.
    \State Compute $p_t \in \Delta(K)$ as follows:
    \State  \label{igw-line}
      $p_t(a) \gets
        \dfrac{1}{\lambda + 2\gamma\bigl(\widehat{l}_t(a)-\min_b \widehat{l}_t(b)\bigr)}$,  where $\lambda \in [1, K]$ is chosen such that $\sum_a p_t(a)=1.$
    \State Play $a_t \sim p_t$, observe the loss $\ell_t(a_t)$, and feed
           $(x_t, a_t, \ell_t(a_t))$ to $\mathcal{O}_{\text{sq}}$.
  \EndFor
\end{algorithmic}
\end{algorithm}

\begin{definition}[\textsc{Inverse Gap Weighting} \citep{foster2020beyond}] \label{igw-def}
	Given any vector $\hat{\bm{v}} \in \mathbb{R}^K,$ the  \textsc{Inverse Gap Weighting} distribution $p = \mathsf{IGW}_\gamma(\hat{\bm{v}})$ with parameter $\gamma \geq 0$ is defined as
    \begin{eqnarray} \label{igw-eqn}
        	p(a) =  \frac{1}{\lambda + 2 \gamma (\hat{v}(a) - \hat{v}(a^\star))}, ~~ a \in [K],
    \end{eqnarray}
	where $a^\star = \arg \min_{a \in [K]}\hat{v}(a)$ is the greedy action, and $\lambda \in [1, K]$ is chosen such that $\sum_a p(a)=1.$
\end{definition}
The following lemma plays a central role in the analysis of the 
\SqCB~algorithm.

\begin{lemma} \label{SqCBthm}
	Fix any arbitrary $\hat{\bm{v}} \in \mathbb{R}^K$  and the parameter $\gamma >0$. Then, for the probability distribution $p =\mathsf{IGW}_{\gamma}(\hat{\bm{v}})$, it holds that for any vector $\bm{v} \in \mathbb{R}^K$ and any distribution $\bm{\mu} \in \Delta_{K},$ we have 
	\begin{eqnarray} \label{SqCB}
		\langle \bm{v}, \bm{p}\rangle - \langle \bm{v}, \bm{\mu}\rangle  \leq  \frac{K}{2\gamma} + \gamma\mathbb{E}_{a \sim \bm{p}}(v(a)-\hat{v}(a))^2. 
	\end{eqnarray}
\end{lemma}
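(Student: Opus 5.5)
The plan is the standard inverse-gap-weighting argument of \citet{foster2020beyond}. Write $\delta(a) := v(a)-\hat v(a)$ and $\mathrm{gap}(a) := \hat v(a)-\hat v(a^\star)\ge 0$, so that by \eqref{igw-eqn} $p(a)\bigl(\lambda+2\gamma\,\mathrm{gap}(a)\bigr)=1$ for every $a$. Using $v=\hat v+\delta$, I would split the left-hand side of \eqref{SqCB} into three pieces:
\begin{align}
\langle \bm v,\bm p\rangle-\langle \bm v,\bm\mu\rangle
= \underbrace{\sum_a p(a)\,\mathrm{gap}(a)}_{(\mathrm{I})}
\;-\;\underbrace{\sum_a \mu(a)\,\mathrm{gap}(a)}_{(\mathrm{II})}
\;+\;\underbrace{\sum_a \bigl(p(a)-\mu(a)\bigr)\delta(a)}_{(\mathrm{III})}.
\end{align}
The terms $\hat v(a^\star)$ cancel because both $\bm p$ and $\bm\mu$ are probability vectors.

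\textbf{Term (I).} This is exact. Multiplying the normalization identity by $\mathrm{gap}(a)$ gives $2\gamma\,p(a)\,\mathrm{gap}(a)=1-\lambda p(a)$. Summing over $a$ and using $\sum_a p(a)=1$ yields
\begin{align}
(\mathrm{I})=\frac{K-\lambda}{2\gamma}.
\end{align}

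\textbf{Term (III).} For each arm I would apply Young's inequality in the form $xy\le \gamma p\,y^2 + x^2/(4\gamma p)$, which gives $(p-\mu)\delta\le \gamma p\,\delta^2+(p-\mu)^2/(4\gamma p)$. Summing over arms, the first parts give exactly $\gamma\,\mathbb{E}_{a\sim \bm p}\delta(a)^2$. For the second parts, expanding and using that $\bm p$ and $\bm\mu$ sum to one,
\begin{align}
\sum_a \frac{(p(a)-\mu(a))^2}{p(a)}=\sum_a\frac{\mu(a)^2}{p(a)}-1.
\end{align}
Since $\mu(a)\le 1$, this is at most $\sum_a \mu(a)/p(a)-1=\lambda-1+2\gamma\,(\mathrm{II})$. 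Hence
\begin{align}
(\mathrm{III})\le \gamma\,\mathbb{E}_{a\sim\bm p}\delta(a)^2+\frac{\lambda-1}{4\gamma}+\frac{(\mathrm{II})}{2}.
\end{align}

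\textbf{Combining.} Adding the three pieces, the $\bm\mu$-dependent term enters with total coefficient $-\tfrac12\,(\mathrm{II})\le 0$, so it can be dropped. The remaining constant is
\begin{align}
\frac{K-\lambda}{2\gamma}+\frac{\lambda-1}{4\gamma},
\end{align}
which is decreasing in $\lambda$. Since $\lambda\ge 1$, it is at most $(K-1)/(2\gamma)\le K/(2\gamma)$, which gives \eqref{SqCB}.

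The main obstacle is getting the constant down to $K/(2\gamma)$. A naive split, bounding the $\bm p$- and $\bm\mu$-weighted error terms separately and estimating (I) crudely, loses a factor of two. Two choices avoid this. First, pairing the error terms as $(p-\mu)\delta$ before applying Young's inequality, so that only half of the $\bm\mu$-gap term is consumed and the other half stays negative. Second, evaluating (I) exactly through the normalization identity rather than bounding each summand by $1/(2\gamma)$. Both use the facts that $\lambda\in[1,K]$ and $p(a)>0$ for every arm, which hold by the definition of $\mathsf{IGW}_\gamma$.
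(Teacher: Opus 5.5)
Your proof is correct, and it takes a genuinely different and sharper route than the one the paper relies on. The paper gives no argument of its own; it defers to \citet[Proposition 9]{foster2023foundations}, where the lemma is proved by the usual three-term decomposition. That argument uses linearity in $\bm\mu$ to compare with a single arm $b$. It then bounds the $\bm p$-weighted error $\sum_a p(a)\delta(a)$ and the comparator error $-\delta(b)$ by two \emph{separate} AM--GM steps, each spending half of the budget $\gamma\,\mathbb{E}_{a\sim\bm p}\delta(a)^2$. The $\mathrm{gap}(b)$ hidden in $1/(2\gamma p(b))$ then cancels against $-\mathrm{gap}(b)$. That route yields the constant $(K+\lambda)/(2\gamma)\le K/\gamma$, or $(K+1)/(2\gamma)$ if term $(\mathrm{I})$ is evaluated exactly as you do. So by itself it does not reach the $K/(2\gamma)$ stated here, which is the constant used in the proof of Proposition~\ref{reg-decomp-proposition}.

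Your single Young step on $(p(a)-\mu(a))\delta(a)$ is the exact dual computation, $\sup_{\bm\delta}\bigl\{\sum_a (p(a)-\mu(a))\delta(a)-\gamma\sum_a p(a)\delta(a)^2\bigr\}=\frac{1}{4\gamma}\bigl(\sum_a \mu(a)^2/p(a)-1\bigr)$. This is a $\chi^2(\bm\mu\,\|\,\bm p)$ bound, and it is exactly where the factor of two is saved. Your resulting constant $\frac{2K-\lambda-1}{4\gamma}\le\frac{K-1}{2\gamma}$ is the best possible uniform one. Take $\bm\mu$ to be the point mass at $a^\star$ and the maximizing $\bm\delta$: then every inequality you use up to $\frac{2K-\lambda-1}{4\gamma}$ holds with equality, and letting the gaps grow sends $\lambda\downarrow 1$.

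The cited route is slightly more modular, since it treats each error term in isolation against a point-mass comparator. Yours handles an arbitrary $\bm\mu$ in one step, and it is what actually certifies the constant the paper uses.
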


The LHS of \eqref{SqCB} may be interpreted as the incremental regret for learning the cost vector $\bm{v}$ and the second term on the RHS may be interpreted as the error for estimating $v(a)$ with $\hat{v}(a)$ while sampling the coordinate $a$ using the \IGW~policy. See \citet[Proposition 9]{foster2023foundations} for proof of Lemma \ref{SqCBthm}. 

\section{Algorithm Design} 
In this section, we first present the algorithm and then show its derivation. 
Our reduction scheme simply runs the \SqCB~algorithm with a sequence of adaptively defined surrogate reward functions $\{\hat{L}_t\}_{t \geq 1}$ (see  Algorithm \ref{ccb2}). 
The surrogate function linearly combines the reward and cost functions weighted appropriately by a non-decreasing function of the cumulative cost accrued so far. See Figure~\ref{flowchart} for a schematic. 
\begin{algorithm}[H]
\caption{Constrained Contextual Bandits with Regression Oracle}
\label{ccb2}
\begin{algorithmic}[1]
 \Require Non-decreasing convex Lyapunov function $\Phi(\cdot),$ Regression oracle $\mathcal{O}_{\textrm{sq}},$ Error bound $U_T$
 \State Initialize $Q(0) = 0.$
 \For{$t = 1, \dots, T$}
 \State Receive context $x_t$
 \State Invoke $\mathcal{O}_{\text{sq}}$ to predict the reward $\hat{f}_t(x_t, \cdot)$ and cost $\hat{g}_t(x_t, \cdot)$ for each action.
 \State Construct surrogate reward estimates for all arms:
$$\hat{L}_t(x_t, a) = \hat{f}_t(x_t,a) - \Phi'(Q(t-1)) \hat{g}_t(x_t,a), ~~ a \in [K].$$
\State Randomly play an arm $a_t \sim \mathsf{IGW}_{\gamma_t}(-\hat{L}_t(x_t, \cdot))$ with 
 $\gamma_t = \frac{1}{2z_t} \sqrt{\frac{K}{U_T}\sum_{\tau=1}^{t} z_\tau},$ where $z_{t} \equiv \max\big(1, \big(\Phi'(Q(t-1))\big)^2\big)$.
\State Observe reward $r_t \equiv f_t(x_t, a_t)$ and cost $c_t \equiv g_t(x_t, a_t)$, and pass $(x_t, a_t, r_t, c_t)$ to $\mathcal{O}_{\text{sq}}$.
   \State Update  $Q(t) = \big(Q(t-1) + c_t\big)^+.$

\EndFor
\end{algorithmic}
\end{algorithm}

Algorithm \ref{ccb2} differs from the LOE2D framework of \citet{guo2024stochastic} in several key aspects, leading to stronger theoretical guarantees under significantly weaker assumptions with a compact, straightforward proof. On the  algorithmic side, while the exploration parameter $\gamma_t$ of the \IGW~policy in Algorithm \ref{ccb2} depends adaptively on all previous \CCV~variables, the corresponding parameter in \citet{guo2024stochastic} depends only on the current \CCV~\citep[Eqn (5)]{guo2024stochastic}. Furthermore, unlike the quadratic Lyapunov function as in \citet{guo2024stochastic}, we will see that exponential Lyapunov function leads to improved bounds for a range of  benchmarks. The construction of the surrogate reward function $\hat{L}_t(x_t, \cdot)$ follows from the regret decomposition framework described next.  


\begin{figure*}
\centering
	\includegraphics[scale=0.7]{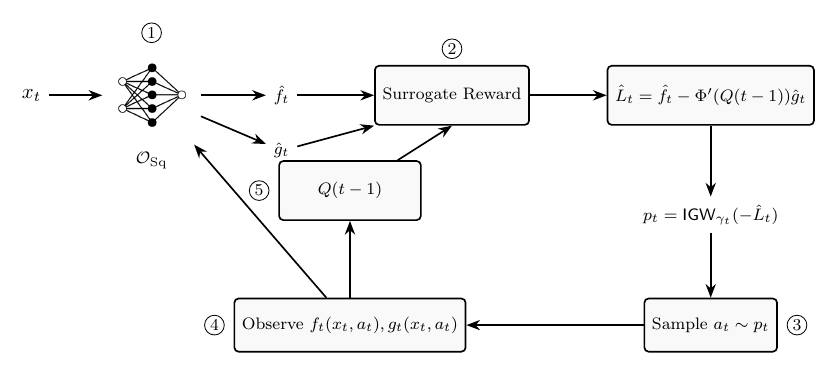}
	\caption{\small{A schematic of the proposed algorithmic scheme for the constrained contextual bandit (\CCB) problem. The numbers within the circles show the sequence of operations performed at any round $t \geq 1$. The variable $Q(t)$ denotes the \CCV~after round $t$ and \IGW(.) denotes inverse gap weighting.} 
 }
	\label{flowchart}
\end{figure*} 

\subsection{Derivation and Performance Bounds} \label{reg-decomp-sec}
In this section, we derive the regret decomposition inequality for Round-wise feasible in expectation benchmark (Definition \ref{assum:in_expect}). Clearly, the inequality remains valid for a sub-class of the above benchmark satisfying Slater's condition (Definition \ref{assum:in_expect-slater}) and Almost-surely round-wise feasible benchmarks (Definition \ref{assum:almost_sure}). Corresponding inequality for the long-term feasible benchmarks (Definition \ref{assum:lt_budget}) involves another term involving the budget $B_T$ and given in Eqn.\ \eqref{new-reg-decomp} in Appendix \ref{sec:lt_budget}. 

We define a non-negative upper bound to the cumulative cost up to round $t$ by $Q(t)$, which satisfies the following Lindley-type recursion \citep{asmussen2003applied}:  
\begin{eqnarray} \label{q-recur}
	Q(t) = \big(Q(t-1) + g_t(x_t, a_t)\big)^+,
\end{eqnarray}
where we denote $\max(0,x)= (x)^+.$ By expanding Eqn.\ \eqref{q-recur}, it immediately follows that $Q_t \geq \mathsf{CCV}_t, \forall t$.  Let $\Phi: \mathbb{R} \to \mathbb{R}_+$ be a twice differentiable, convex Lyapunov function. We also assume that the minimum value of $\Phi (x)$ over its domain is achieved at $x=0,$ $\Phi'(0) \geq 0,$ and that $\Phi''(\cdot)$ is monotone. Using a second-order Taylor series expansion for the function $\Phi (\cdot)$, we have:  
\begin{eqnarray*}
	\Phi(Q(t)) &\stackrel{(a)}{\leq}& \Phi(Q(t-1)+g_t(x_t, a_t)) 
    =\Phi(Q(t-1)) + \Phi'(Q(t-1)) g_t(x_t, a_t) + \frac{1}{2}\Phi''(\zeta) g_t^2,
	\end{eqnarray*}
	for some $\zeta$ that lies between $Q(t-1)$ and $Q(t).$ From \eqref{q-recur}, it can be seen that inequality (a) is in fact an equality for non-negative $g_t$'s. For signed $g_t$'s, inequality (a) follows from the fact that $\Phi(x) \geq \Phi(0), \forall x \in \mathbb{R}.$
	Using the monotonicity and non-negativity of $\Phi''(\cdot),$ and the fact that $g_t^2 \leq 1,$ we can bound the increase in the Lyapunov function on round $t$ as 
	\begin{eqnarray} \label{lyapunov-incr}
		\Phi(Q(t)) - \Phi(Q(t-1)) \leq \Phi'(Q(t-1)) g_t(x_t, a_t) + \frac{1}{2}(\Phi''(Q(t))+ \Phi''(Q(t-1))),
	\end{eqnarray}
    where in the last line, we have used the monotonicity of $\Phi'', $ which leads to the bound $\Phi''(\zeta) \leq \max (\Phi''(Q(t)), \Phi''(Q(t-1))) \leq \Phi''(Q(t))  + \Phi''(Q(t-1)).$

Next, adding $f_t(x_t, a^\star) - f_t(x_t, a_t)$ to both sides of inequality \eqref{lyapunov-incr}, where $a^\star$ is a randomized action following an arbitrary stationary policy $\pi^\star : \mathcal{X} \mapsto \Delta_K$ 
\begin{eqnarray} \label{drift-ineq1}
	&&\Phi(Q(t)) - \Phi(Q(t-1))+ f_t(x_t, a^\star) - f_t(x_t, a_t)  \\
    &&\leq f_t(x_t, a^\star) - \Phi'(Q(t-1)) g_t(x_t, a^\star)
	 - \bigg(f_t(x_t, a_t) - \Phi'(Q(t-1)) g_t(x_t, a_t)\bigg)\nonumber\\
	&& +\frac{1}{2}(\Phi''(Q(t))+ \Phi''(Q(t-1))) 
    + \Phi'(Q(t-1)) g_t(x_t, a^\star),
\end{eqnarray}
 
 
 \paragraph{Surrogate Rewards:} \label{surr-rew-sec}
 Let $\{\mathcal{F}_\tau\}_{\tau \geq 1}$ be the natural filtration of the observed random variables, \emph{i.e.,} $\mathcal{F}_{t-1} = \sigma\big(\{x_{\tau+1}, f_{\tau}, g_\tau, a_{\tau}\}_{\tau=1}^{t-1}\big), t> 1$. Choosing the benchmark policy $\pi^\star$ to be any feasible in expectation policy (Definition \ref{assum:in_expect}), and taking conditional expectation (conditioned on $\mathcal{F}_{t-1}$) of the randomness of the reward and cost functions and the randomness of the online and the benchmark policies, it follows that
  \begin{eqnarray} \label{reg-decomp-ineq1}
 \mathbb{E}\!\left[\Phi(Q(t)) \mid \mathcal{F}_{t-1}\right]
- \Phi(Q(t-1))
+ \left\langle \bm{f}^\star(x_t), \bm{\pi}^\star(\cdot\mid x_t) - \bm{\pi}_t(\cdot\mid x_t) \right\rangle \nonumber \\
    \leq \langle \bm{L}^\star_t(x_t),\bm{\pi}^\star(\cdot|x_t) -\bm{\pi}_t(\cdot|x_t) \rangle 
 	 + \frac{1}{2}(\Phi''(Q(t))+ \Phi''(Q(t-1))),
 \end{eqnarray} 
 where, in the above, we have defined the \emph{target} surrogate reward function as:
\begin{eqnarray} \label{est-surr-cost}
	L^\star_t(x_t, a) = f^\star(x_t,a) - \Phi'(Q(t-1)) g^\star(x_t,a), ~~ a \in [K], 
\end{eqnarray}
and the \emph{estimated} surrogate reward function $L_t(x_t, \cdot): [K] \to \mathbb{R}$ as:
\begin{eqnarray} \label{surr-cost-def}
	\hat{L}_t(x_t, a) = \hat{f}_t(x_t,a) - \Phi'(Q(t-1)) \hat{g}_t(x_t,a), ~~ a \in [K]. 
\end{eqnarray}
In Eqn.\ \eqref{reg-decomp-ineq1}, we have used the feasibility of the policy $\pi^\star$ which implies that $\mathbb{E}_{a^\star \sim \pi^\star} g_t(x_t,a^\star) \leq 0$ and the fact that $\Phi'(x) \geq 0, \forall x \geq 0.$ The following key technical result gives a simultaneous control over both the \CCV~and \Regret. 
\begin{proposition}[The Regret decomposition inequality] \label{reg-decomp-proposition}
The expected \CCV~and \Regret~for Algorithm \ref{ccb2} at any round $t \in [T]$ with a round-wise feasible in expectation benchmark can be decomposed as:
 \begin{eqnarray} \label{reg-decomp-ultimate}
	&&\mathbb{E}(\Phi(Q(t))) - \mathbb{E}(\Phi(Q(0))) + \mathbb{E} \mathsf{Regret}_t (\pi^\star) \nonumber \\ &\leq& 4 \sqrt{KU_Tt } + \sum_{\tau=1}^t \mathbb{E}\Phi''\big[(Q(\tau))]\big)+ 4 \sqrt{KU_T}\mathbb{E} \sqrt{\sum_{\tau=0}^{t-1}\bigg([\Phi'(Q(\tau))]^2\bigg)}.
\end{eqnarray}
\end{proposition}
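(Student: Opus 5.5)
Starting from the one-step drift inequality \eqref{reg-decomp-ineq1}, take total expectations and sum over $\tau=1,\dots,t$. The left-hand side telescopes to $\mathbb{E}\Phi(Q(t))-\mathbb{E}\Phi(Q(0))+\mathbb{E}\,\mathsf{Regret}_t(\pi^\star)$. Here the realized rewards $f_\tau(x_\tau,a_\tau)$ in the regret are replaced by their conditional means $\langle \bm f^\star(x_\tau),\bm\pi_\tau\rangle$, which is exact in expectation by realizability. The curvature terms $\tfrac12(\Phi''(Q(\tau))+\Phi''(Q(\tau-1)))$ sum to at most $\sum_{\tau=1}^t\mathbb{E}\Phi''(Q(\tau))$ up to the boundary term $\tfrac12\Phi''(Q(0))$, which I would absorb into the constants or handle by reindexing; this gives the middle term of \eqref{reg-decomp-ultimate}. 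It then remains to bound $\sum_\tau \mathbb{E}\langle \bm L^\star_\tau,\bm\pi^\star-\bm\pi_\tau\rangle$.

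For each $\tau$, apply Lemma \ref{SqCBthm} with the loss vector $\bm v=-\bm L^\star_\tau(x_\tau,\cdot)$, the estimate $\hat{\bm v}=-\hat L_\tau(x_\tau,\cdot)$, $\bm\mu=\bm\pi^\star(\cdot|x_\tau)$ and $\gamma=\gamma_\tau$. Since $\bm\pi_\tau=\mathsf{IGW}_{\gamma_\tau}(-\hat L_\tau)$, this yields
\[
\langle \bm L^\star_\tau,\bm\pi^\star-\bm\pi_\tau\rangle\le \frac{K}{2\gamma_\tau}+\gamma_\tau\,\mathbb{E}_{a\sim\bm\pi_\tau}\big(L^\star_\tau(a)-\hat L_\tau(a)\big)^2 .
\]
By $(x+y)^2\le 2x^2+2y^2$, the squared error is at most $2(\hat f_\tau-f^\star)^2+2\Phi'(Q(\tau-1))^2(\hat g_\tau-g^\star)^2\le 2z_\tau\big[(\hat f_\tau-f^\star)^2+(\hat g_\tau-g^\star)^2\big]$, where $z_\tau=\max(1,\Phi'(Q(\tau-1))^2)$. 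Write $S_t=\sum_{\tau\le t}z_\tau$ and $e_\tau$ for the bracketed oracle error at $(x_\tau,a_\tau)$. With $\gamma_\tau=\frac{1}{2z_\tau}\sqrt{KS_\tau/U_T}$, the two pieces become $\frac{K}{2\gamma_\tau}=z_\tau\sqrt{KU_T/S_\tau}$ and $2\gamma_\tau z_\tau e_\tau=\sqrt{KS_\tau/U_T}\,e_\tau$.

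The first piece is handled by the standard inequality $\sum_\tau z_\tau/\sqrt{S_\tau}\le 2\sqrt{S_t}$, which holds because $z_\tau\ge1$ and $S_\tau$ is increasing. This gives at most $2\sqrt{KU_T S_t}$. For the second piece, bound $S_\tau\le S_t$ and use $\sum_\tau e_\tau\le 2U_T$ from \eqref{oracle-guarantee}, giving $2\sqrt{KU_TS_t}$ as well. Finally, $\sqrt{S_t}\le\sqrt{t}+\sqrt{\sum_{\tau=0}^{t-1}\Phi'(Q(\tau))^2}$. Combining these produces the constants $4\sqrt{KU_Tt}$ and $4\sqrt{KU_T}\,\mathbb{E}\sqrt{\sum\Phi'^2}$ of \eqref{reg-decomp-ultimate}.

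The main obstacle is the expectation in the second piece. The oracle guarantee \eqref{oracle-guarantee} only controls $\sum_\tau\mathbb{E}e_\tau$, but here $e_\tau$ is multiplied by the random, non-predictable quantity $\sqrt{S_t}$, so we need $\mathbb{E}\big[\sqrt{S_t}\sum_\tau e_\tau\big]\lesssim U_T\,\mathbb{E}\sqrt{S_t}$. I would first try to use the pathwise (almost sure) form of the oracle regret bound, $\sum_\tau e_\tau\le 2U_T$, which holds for the deterministic oracles cited (exponential weights, VAW) applied to any sequence. Only then take expectations of $\sqrt{S_t}$. If only the in-expectation bound is available, a fallback is Cauchy–Schwarz, at the cost of replacing $\mathbb{E}\sqrt{S_t}$ by $\sqrt{\mathbb{E}S_t}$.
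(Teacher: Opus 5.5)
Your route is the paper's route, step for step. You apply Lemma~\ref{SqCBthm} to $-L^\star_\tau$ and $-\hat L_\tau$, split the squared error with $(x+y)^2\le 2x^2+2y^2$ and $z_\tau$, and use $\gamma_\tau$ to turn the two pieces into $z_\tau\sqrt{KU_T/S_\tau}$ and $\sqrt{KS_\tau/U_T}\,e_\tau$. The rest also matches: the integral bound $\sum_\tau z_\tau/\sqrt{S_\tau}\le 2\sqrt{S_t}$, the crude bound $S_\tau\le S_t$ followed by \eqref{oracle-guarantee}, and $\sqrt{S_t}\le\sqrt t+\sqrt{\sum_{\tau=0}^{t-1}\Phi'(Q(\tau))^2}$.

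The obstacle you flag at the end is genuine, and the paper passes over it. It places $\sqrt{K\sum_{\tau\le t'}z_\tau/U_T}$ outside the expectation over actions, invokes \eqref{oracle-guarantee}, and then takes expectations of \eqref{final-reg-decomp} as if it held pathwise. So your ``first try'' is exactly what the paper does implicitly. Neither of your resolutions works as stated, however.

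\textbf{The pathwise claim.} Exponential weights and Vovk--Azoury--Warmuth give pathwise bounds on the squared-loss regret against the observed noisy targets $r_\tau,c_\tau$. They do not give pathwise bounds on the estimation error. Since $(\hat f_\tau-f^\star)^2=(\hat f_\tau-r_\tau)^2-(f^\star-r_\tau)^2+2(\hat f_\tau-f^\star)(r_\tau-f^\star)$, the sum $\sum_\tau e_\tau$ is at most that regret plus a martingale term that vanishes only in expectation. Hence $\sum_\tau e_\tau\le 2U_T$ holds in expectation, or with probability $1-\delta$ after adding $O(\log(1/\delta))$, but not almost surely when rewards and costs are random.

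\textbf{The Cauchy--Schwarz fallback.} It needs a second-moment bound $\mathbb{E}(\sum_\tau e_\tau)^2=O(U_T^2)$, which \eqref{oracle-guarantee} does not supply. Boundedness $e_\tau\le 8$ only gives $O(tU_T)$, losing a factor $\sqrt{t/U_T}$. It also puts $\sqrt{\mathbb{E}S_t}$ in place of $\mathbb{E}\sqrt{S_t}$. That is too weak for the exponential-potential parts (c) and (e), which rely on the pathwise bound $\sqrt{\sum_{\tau}\Phi'(Q(\tau))^2}\le\sqrt t\,\Phi'(Q(t))$.

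\textbf{What would fix it.} Like the paper, your argument really needs an almost-sure version of \eqref{oracle-guarantee}, and you should state it as an explicit assumption. Alternatively, run a high-probability argument on the oracle error. That is routine for the quadratic potential: there $S_t\le t+4t^3/V^2$ deterministically, so $\delta=T^{-2}$ suffices at the cost of an extra $O(\log T)$ in $U_T$. It is delicate for the exponential potential, whose only deterministic bound on $S_t$ is exponential in $\lambda t$.

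\textbf{The boundary term.} The term $\tfrac12\Phi''(Q(0))$ cannot be ``absorbed into the constants,'' because the constants in \eqref{reg-decomp-ultimate} are explicit. The telescoped curvature sum equals $\sum_{\tau=1}^t\Phi''(Q(\tau))+\tfrac12\Phi''(Q(0))-\tfrac12\Phi''(Q(t))$. The last two terms are non-positive because $Q(t)\ge 0=Q(0)$ and $\Phi''$ is non-decreasing on $[0,\infty)$ for both potentials used (constant for the quadratic, increasing for the exponential). The paper drops this term without comment.
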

By instantiating Proposition \ref{reg-decomp-proposition} with appropriate Lyapunov functions $\Phi(\cdot),$ we obtain the main result of the paper. 

\begin{theorem}
\label{thm:main}
Under the realizability assumption (Assumption 1), Algorithm \ref{ccb2}, with an appropriate Lyapunov function $\Phi(\cdot),$ achieves the following expected Regret and Cumulative Constraint Violation bounds for adversarial contexts with different benchmarks described below.

\begin{enumerate}[label=(\alph*), leftmargin=2em]
    \item \textbf{Round-wise Feasibility in Expectation:} If the benchmark policy $\pi^\star$ is feasible in expectation (Def. \ref{assum:in_expect}), then choosing $\Phi(x)=\nicefrac{x^2}{V}$ with $V = \sqrt{KTU_T}$,  Algorithm \ref{ccb2} achieves:
    \begin{eqnarray*}
        \mathbb{E}\mathsf{Regret}_T = \mathcal{O}(\sqrt{K}T^{3/4}U_T^{1/4}), ~~
        \mathbb{E} \mathsf{CCV}_T = \mathcal{O}(\sqrt{K}T^{3/4}U_T^{1/4}). 
    \end{eqnarray*}
Furthermore, the time-averaged regret can be bounded more tightly as  \[\frac{1}{T}\sum_{t=1}^T\mathbb{E} \mathsf{Regret}_t (\pi^\star) = O(\sqrt{KTU_T}). \]

    \item \textbf{Round-wise feasibility with Slater's Condition:} If the benchmark $\pi^\star$ additionally satisfies Slater's condition with parameter $\epsilon > 0$ (Def. \ref{assum:in_expect-slater}), then with the same Lyapunov function $\Phi(\cdot),$ the average \CCV~in part (a) can be improved to:
     $\frac{1}{T}\sum_{\tau=1}^{T} \mathbb{E}\mathsf{CCV}_\tau = \mathcal{O}\left(\frac{\sqrt{KTU_T}}{\epsilon}\right),$
    while keeping the \Regret~bound the same as in part (a). Note that the algorithm does not need to know $\epsilon.$
\begin{corollary}
An application of the Markov inequality shows that for any fixed, say $99\%$ of the total number of rounds, the \CCV~is at most $O(\frac{\sqrt{KTU_T}}{\epsilon}),$ partially resolving an open question raised by \citep{guo2024stochastic} by improving the \CCV~bound by a factor of $O(\epsilon^{-1})$.
\end{corollary}
    \item \textbf{Almost-Sure Round-wise Feasibility:} If the benchmark $\pi^\star$ is almost surely feasible (Def. \ref{assum:almost_sure}), then Algorithm \ref{ccb2},  with $\Phi(x) = \exp(\lambda x), \lambda= (8 \sqrt{KU_T T})^{-1},$ yields:
    \begin{eqnarray*}
         \mathbb{E}\mathsf{Regret}_T \leq O(\sqrt{KU_T T}), ~~
         \mathbb{E}\mathsf{CCV}_T = \tilde{\mathcal{O}}(\sqrt{KTU_T}).
    \end{eqnarray*}

    \item \textbf{Round-wise feasibility with Non-negative \Regret:} If the benchmark policy $\pi^\star$ is feasible in expectation and the online policy has non-negative average and terminal regret, \emph{i.e.,} $\frac{1}{T}\sum_{t=1}^T \mathbb{E} \mathsf{Regret}_t(\pi^\star) \geq 0$ and $\mathsf{Regret}_T \geq 0,$ then choosing $\Phi(x)=\nicefrac{x^2}{V}, x\geq 0,$ with $V = \sqrt{KTU_T}$,  Algorithm \ref{ccb2} yields:
\begin{eqnarray*}
    \mathbb{E} \mathsf{Regret}_T = O(\sqrt{KU_T T}), ~~
    \mathbb{E}\mathsf{CCV}_T = O(\sqrt{KU_T T}).
\end{eqnarray*}

    \item \textbf{Long-term Feasibility with Non-Negative Cost ($\mathsf{CBwK}$):} For a benchmark $\pi^\star$ that is long-term budget feasible for a total budget $B_T \ge 0$ (Def. \ref{assum:lt_budget}) and non-negative costs, then choosing $\Phi(x) = \exp(\lambda x)$ with $\lambda= (8 \sqrt{KU_T T} + 2B_T)^{-1},$ Algorithm \ref{ccb2} yields:
    \begin{eqnarray*}
         \mathbb{E}\mathsf{Regret}_T \leq O(\sqrt{KU_T T}), ~~
         \mathbb{E} \mathsf{CCV}_T = \tilde{\mathcal{O}}(\sqrt{KTU_T} + B_T \log T).
    \end{eqnarray*}

\item \textbf{Long-term Feasibility with Stochastic Contexts ($\mathsf{CBwLC}$):} Finally, we consider the setting of i.i.d. stochastic contexts, a long-term feasible benchmark $\pi^*$ for a total budget of $B_T$ (Def. \ref{assum:lt_budget}), and signed costs (allowing negative values, c.f. part (d)). In this setting, working with the reduced cost functions $\bar{g}_t(x, a)\equiv g_t(x, a) - \nicefrac{B_T}{T},~ \forall x, a, t\geq 1$ while choosing $\Phi(x)=\nicefrac{x^2}{V}$ with $V = \sqrt{KTU_T}$, Algorithm \ref{ccb2} yields:
\begin{eqnarray*}
    \mathbb{E} \mathsf{Regret}_T = O(\sqrt{K}T^{3/4}U_T^{1/4}), ~~
    \mathbb{E}\mathsf{CCV}_T = O(\sqrt{K}T^{3/4}U_T^{1/4}).
\end{eqnarray*}
\end{enumerate}
\end{theorem}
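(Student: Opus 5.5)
The plan is to instantiate Proposition~\ref{reg-decomp-proposition} at $t=T$ with each Lyapunov function and then separate the regret and \CCV~information it contains. Two facts are used throughout. First, $\mathsf{CCV}_t \le Q(t)$. Second, $\mathsf{Regret}_t \ge -2t$, since rewards lie in $[-1,1]$; this lets us lower bound the regret term whenever we want to isolate $\Phi(Q(T))$.

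\emph{Quadratic potential, parts (a), (b), (d), (f).} Take $\Phi(x)=x^2/V$. Then $\Phi''=2/V$, so $\sum_\tau \Phi''\le 2T/V$, and $\Phi'(Q(\tau))=2Q(\tau)/V$. Since $Q(\tau)\le Q(T)+(T-\tau)$ is awkward, I will use $\sqrt{\sum_\tau Q(\tau)^2}\le \sqrt{T}\max_\tau Q(\tau)$ instead. The resulting inequality is
\[
\mathbb{E}\,Q(T)^2/V+\mathbb{E}\,\mathsf{Regret}_T\le 4\sqrt{KU_TT}+2T/V+\tfrac{8\sqrt{KU_TT}}{V}\,\mathbb{E}\max_\tau Q(\tau).
\]
Applying the proposition at every $t$ and taking the maximum, the last term is absorbed into the left side by AM-GM, $c\,M\le M^2/(2V)+c^2V/2$, which is harmless once $V=\sqrt{KTU_T}$.

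\begin{itemize}
\item[(a)] Dropping $\Phi\ge0$ gives $\mathbb{E}\,\mathsf{Regret}_T=O(\sqrt{KTU_T}+T/V)$, which is even stronger than claimed. Using $\mathsf{Regret}\ge -2T$ gives $\mathbb{E}\,Q(T)^2\le V\cdot O(T)$, hence $\mathbb{E}\,\mathsf{CCV}_T\le\sqrt{VT}=O(\sqrt K T^{3/4}U_T^{1/4})$. The averaged-regret statement follows by summing the proposition over $t$ and dropping $\Phi\ge0$.
\item[(d)] Nonnegative regret lets us drop the regret term instead. This gives $\mathbb{E}\,Q(T)^2\le V\cdot O(\sqrt{KTU_T})$, so $\mathbb{E}\,\mathsf{CCV}_T=O(\sqrt{KTU_T})$.
\item[(b)] Redo the drift step with Slater's slack. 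This adds $-\epsilon\,\Phi'(Q(t-1))=-2\epsilon Q(t-1)/V$ to each increment. Summing over $t$ and using $\mathsf{Regret}_t\ge$ (averaged regret bound) gives $\frac{2\epsilon}{V}\sum_t\mathbb{E}\,Q(t)\lesssim T\sqrt{KTU_T}/V+\ldots$, hence $\frac1T\sum_t\mathbb{E}\,\mathsf{CCV}_t=O(\sqrt{KTU_T}/\epsilon)$.
\item[(f)] Replace $g$ by $\bar g=g-B_T/T$. Under i.i.d.\ contexts the long-term-feasible stationary $\pi^\star$ satisfies $\mathbb{E}_{x}\mathbb{E}_{\pi^\star}\bar g\le0$ per round in conditional expectation, since $x_t$ is independent of $\mathcal F_{t-1}$. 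So \eqref{reg-decomp-ineq1} holds after averaging over $x_t$, and part (a) carries over.
\end{itemize}

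\emph{Exponential potential, parts (c), (e).} Take $\Phi(x)=e^{\lambda x}$. Then $\Phi''=\lambda^2\Phi$ and $\Phi'=\lambda\Phi$. In the proposition, $\sum_\tau\mathbb{E}\Phi''(Q(\tau))\le\lambda^2T\,\mathbb{E}\max_\tau\Phi(Q(\tau))$, and $4\sqrt{KU_T}\,\mathbb{E}\sqrt{\sum\Phi'^2}\le4\lambda\sqrt{KU_TT}\,\mathbb{E}\max_\tau\Phi(Q(\tau))$. With $\lambda=(8\sqrt{KU_TT})^{-1}$ both coefficients are at most $1/2$ in total (note $\lambda^2T\le\lambda\sqrt{KU_TT}$ since $KU_T\ge1$). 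They can therefore be absorbed into $\Phi(Q(T))$, after handling the max over $t$ by applying the bound at the maximizing time or a stopping-time argument. This yields
\[
\tfrac12\,\mathbb{E}\,\Phi(Q(T))+\mathbb{E}\,\mathsf{Regret}_T\le O(\sqrt{KU_TT})+1.
\]
For (c), almost-sure feasibility gives the extra structure that the benchmark regret against the feasible policy can be lower bounded; the crude bound $-2T$ also suffices here. This gives $\mathbb{E}\,e^{\lambda Q(T)}\le O(T)$, so by Jensen $\mathbb{E}\,\mathsf{CCV}_T\le\lambda^{-1}\log O(T)=\tilde O(\sqrt{KTU_T})$. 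Regret follows from $\Phi\ge0$.

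For (e), use the budget version \eqref{new-reg-decomp}, whose extra term is $\Phi'\cdot B_T/T$ per round. This suggests running the queue on $g-B_T/T$, or equivalently carrying a term $\lambda B_T\max\Phi$. The choice $\lambda\le(2B_T)^{-1}$ keeps that coefficient below $1/2$ as well. Nonnegative costs are needed so that the queue equals the cumulative cost and $Q(T)\ge\mathsf{CCV}_T+B_T$ after recentering. The same log/Jensen step then gives $\mathbb{E}\,\mathsf{CCV}_T\le\lambda^{-1}\log O(T)=\tilde O(\sqrt{KTU_T}+B_T\log T)$.

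\emph{Main obstacle.} I expect the hardest step to be rigorously absorbing the $\max_\tau$ terms of $\Phi$ and $\Phi'$ into the terminal $\Phi(Q(T))$, because $Q$ is not monotone. My first attempt will use the fact that $Q$ changes by at most $1$ per round: for the exponential, $\Phi(Q(\tau))\le e^{\lambda(T-\tau)}\Phi(Q(T))$ fails as a useful bound. I will therefore instead apply the proposition at the random time $\tau^\ast=\arg\max$ via optional stopping, together with $\mathsf{Regret}_{\tau}\ge-2T$. For the quadratic case I will use AM-GM as sketched above.
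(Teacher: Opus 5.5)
Your plan breaks at the step you flag as the main obstacle: absorbing the trajectory terms $\sum_\tau\Phi''(Q(\tau))$ and $\sqrt{\sum_\tau\Phi'(Q(\tau))^2}$ into the terminal $\Phi(Q(T))$. Your proposed fixes do not work.

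\textbf{Exponential parts (c) and (e).} The paper resolves this step by monotonicity of $Q$.
In (e) the costs are nonnegative. In (c), almost-sure feasibility is used precisely to replace $g_t$ by $g_t^+$: the benchmark only plays arms with $g_t\le 0$, so it stays feasible for the clipped instance. After this, $Q$ is nondecreasing, so pathwise $\sum_\tau\Phi''(Q(\tau))\le\lambda^2T\,\Phi(Q(T))$ and $\sqrt{\sum_\tau\Phi'(Q(\tau))^2}\le\lambda\sqrt{T}\,\Phi(Q(T))$, and absorption is immediate.

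You instead read almost-sure feasibility as supplying a regret lower bound, and you propose applying the inequality at $\tau^\ast=\arg\max_\tau Q(\tau)$ via optional stopping. This fails for two reasons. First, a path argmax is not a stopping time. Second, an inequality at $\tau^\ast$ controls $\mathsf{Regret}_{\tau^\ast}$, not $\mathsf{Regret}_T$. Nothing in the inequality rules out $Q$ climbing to order $\lambda^{-1}\log T$ mid-horizon and then draining. In that case the right-hand side is $\Theta(T)$ while $\Phi(Q(T))\approx 1$, and no $O(\sqrt{T})$ regret bound follows.

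For (e), the ``recenter by $B_T/T$'' option fails for the same reason, since it destroys monotonicity. It also fails because with adversarial contexts the benchmark's per-round cost need not be $\le B_T/T$. So neither your per-round $\Phi'\cdot B_T/T$ term nor $\sum_\tau\Phi'(Q(\tau-1))\,\mathbb{E}_{\pi^\star}\bar g_\tau$ is controlled. The paper's step is $\sum_\tau\Phi'(Q(\tau-1))\,\mathbb{E}_{\pi^\star}g_\tau\le\Phi'(Q(t-1))\,B_T$, which uses nonnegativity and monotonicity. It pairs this with the joint choice $\lambda=(8\sqrt{KU_TT}+2B_T)^{-1}$, so that $\lambda(4\sqrt{KU_TT}+B_T)=\tfrac12$. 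Two separate coefficients of $\tfrac12$ would leave nothing to absorb into.

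\textbf{Quadratic parts (a), (b), (d), (f).} The same absorption is invalid here. The term $\tfrac{8\sqrt{KU_TT}}{V}\max_\tau Q(\tau)$ cannot be absorbed into $\mathbb{E}Q(T)^2/V$. After Jensen, $\mathbb{E}\sqrt{\sum_{\tau<t}Q(\tau)^2}\le\sqrt{t}\,M$ with $M^2:=\max_{t\le T}\mathbb{E}Q^2(t)$. Per-$t$ inequalities control this $M$, not $\mathbb{E}\max_\tau Q(\tau)$, and never in terms of $Q(T)$ alone.

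Hence your ``stronger than claimed'' terminal regret $O(\sqrt{KTU_T}+T/V)$ in (a) is unsupported. Done correctly, your route first gives $M^2=O(VT)$ from $\mathsf{Regret}_t\ge-2t$. Plugging this back yields regret $O(\sqrt{KU_T}\,T/\sqrt{V})=O((KU_T)^{1/4}T^{3/4})$, which is what the theorem states. The $\mathsf{CCV}$ bound in (a) then goes through, and so do (f) and (b). For (b), use the crude $\mathsf{Regret}_T\ge-2T$; the averaged regret bound is an upper bound and cannot serve there.

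Part (d) does not go through. Nonnegative regret is assumed only on average and at $T$, so you still only have $M^2=O(VT)$. That gives $\mathbb{E}Q(T)=O((KU_T)^{3/8}T^{5/8})$, not $O(\sqrt{KU_TT})$.

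The missing idea is the paper's time-summed inequality. Summing over $t$ puts $R(T)^2=\sum_t\mathbb{E}Q^2(t)$ on the left, where it absorbs the $8T\sqrt{KU_T}\,R(T)$ term through a quadratic inequality. The average-regret hypothesis enters exactly there, giving $R(T)=O(T\sqrt{KU_T})$. Terminal nonnegativity then gives $\mathbb{E}Q^2(T)=O(KU_TT)$. The same summed absorption, not ``dropping $\Phi\ge0$'', is also what proves the averaged-regret bound in (a).
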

Due to space constraints, proofs have been deferred to the Appendix. Please refer to Section \ref{hard-stopping} in the Appendix for additional results for the \cbwk~and \cbwlc~problems in the hard-stopping setting. 

\paragraph{Proof sketch:} The proofs of all of the above bounds involve solving the regret decomposition inequality \eqref{reg-decomp-ultimate} with different choices of the Lyapunov function $\Phi(\cdot).$ If the costs are non-negative, the $Q(t)$ variables become non-decreasing and the analysis simplifies significantly with the exponential potential function, yielding tight $O(\sqrt{T})$-type bounds (see Remark \ref{exp-fn} in the Appendix for the rationale behind choosing Exponential Lyapunov functions motivated by the solution of a certain associated differential inequality). For signed costs, the above simplification does not hold and we need to use a different technique with the quadratic Lyapunov function, which yields the state-of-the-art bounds. 
\paragraph{Improvement over the State-of-the-Art:} \label{improvement-sec}
In addition to a conceptual unification of prior results via a single inequality \eqref{reg-decomp-ultimate} and a streamlined proof, Theorem \ref{thm:main} improves the state-of-the-art results in multiple ways. The improvements are summarized in Table \ref{tab:comparison}.  


\begin{table*}[t]
\centering
\renewcommand{\arraystretch}{1.5} 
\small 

\begin{tabularx}{\textwidth}{llX} 
\toprule
\textbf{Reference} & \textbf{Our Result} & \textbf{Improvement} \\
\midrule

\citet{guo2024stochastic} & Theorem \ref{thm:main}(a), (e) & 
Achieves the same $\tilde{O}(T^{3/4})$ bounds for \textit{adversarial contexts}. Sharper $O(\sqrt{T})$ regret bound for average regret. Improves $O(T^{3/4}U^{1/4})$ bound for $\mathsf{CBwK}$ (hard-stopping, stochastic contexts) to $O(\sqrt{TU_T})$ (continuing setting, adversarial contexts). \\

\citet{guo2024stochastic} & Theorem \ref{thm:main}(b) & 
Reduces the average \CCV~by a factor of $O(\epsilon^{-1})$ under Slater’s condition even for \textit{adversarial contexts}. \\

\citet{slivkins2023contextual} & \makecell[l]{Theorem \ref{thm:main}(e) \\ ($\mathsf{CBwK}$)} & 
Replaces: (1) Stochastic contexts, (2) A positive and known slack $\zeta$ to the resource constraints, and (3) Large budget regime where $B_T = \Omega(T)$. We remove these restrictive assumptions by considering: (1) \emph{adversarial contexts}, (2) no assumption on the slack, and (3) arbitrary budgets. Generalizes from the hard-stopping to the continuing setting. Achieves optimal $O(\sqrt{TU_T})$ type bounds. \\
\citet{slivkins2023contextual} 
& \makecell[l]{Theorem \ref{thm:main}(f) \\ ($\mathsf{CBwLC}$)}
&  Achieves the state-of-the-art $O(T^{3/4}U^{1/4})$ $\mathsf{Regret}$ and $\mathsf{CCV}$ in the stochastic setting efficiently using  \SqCB~only (no dual algorithm required).\\
\citet{castiglioni2022online} & \makecell[l]{Theorem \ref{thm:main}(e) \\ Theorem \ref{cor:hard_stopping} (a)} & 
Replaces (1) Large budget and (2) hard-stopping requirements. Extends non-contextual $\mathsf{BwK}$ to \cbwk. \\

\citet{han2023optimal} & Theorem \ref{thm:main}(d) & 
Relaxes hard-stopping requirement and provides guarantees for the continuing setting \\

\midrule
\multicolumn{3}{l}{\textbf{Additional contributions (not directly comparable to prior work):}} \\

& Theorem \ref{thm:main}(c) & $\tilde{O}(\sqrt{T U_T})$ \Regret~and \CCV~bounds under almost-sure feasibility \\

& Theorem \ref{thm:main}(d) & Achieves $\tilde{O}(\sqrt{T U_T})$ \Regret~and \CCV~under non-negative average regret assumption\\

\bottomrule
\end{tabularx}
\caption{\small{Comparison of our results with prior work.}}
\label{tab:comparison}
\end{table*}

\begin{remark}
    For signed losses (\emph{e.g.}, in \cbwlc), Eqn.\ \eqref{q-recur} yields $Q(t) = \max_{0\leq a\leq t} \sum_{\tau=a}^t g_\tau(x_\tau, a_\tau)$ \citep[pp. 92]{asmussen2003applied}. Hence, a bound on $\mathbb{E}Q(t)$ uniformly upper bounds the expected cumulative violation over any consecutive interval (not only the end-to-end \CCV). 
 \end{remark}
 \begin{remark}[Unknown $U_T$]
 As in prior works, Algorithm~\ref{ccb2} assumes access to a tight upper bound on the cumulative regression error $U_T \in (0, T]$. In practice, when $U_T$ is unknown, this assumption can be removed via a standard ensemble approach. In particular, we instantiate $O(\log T)$ copies of Algorithm~\ref{ccb2}, each tuned to a different guess for $U_T$ $: \{1,2,2^2,\ldots,2^{\lceil \log_2 T\rceil}\}$,
and combine them using an EXP3 master algorithm run epoch-wise on a slower time-scale. All internal parameters of Algorithm \ref{ccb2} is reset at the beginning of each epoch. At epoch $t$, the loss associated with the base algorithm corresponding to guess $u$ is taken to be sum of $-\hat{L}_t(x_t,a_t)$ over the previous epoch.  
 \end{remark}
\paragraph{Converse results for \cbwk:}
Finally, except for the \cbwk~benchmark (Theorem~\ref{thm:main}, part (e)), the online policy incurs only sublinear excess cost relative to the offline benchmark. In the \cbwk~setting, however, the online policy incurs a cumulative cost of $O(B_T \log T) + O(\sqrt{T})$, whereas the stationary offline benchmark incurs only $B_T$. In the following theorem, we show that the $O(B_T \log T)$ term in the violation bound of Theorem~\ref{thm:main}(e) is unavoidable. We also establish that even in the hard-stopping setting, an $O(\log T)$ relaxation of the benchmark is information-theoretically necessary.

\begin{theorem}[Lower bounds for \cbwk]
\label{thm:lower_bounds}
Consider the \cbwk~problem in both hard-stopping and continuing settings. Let $T$ be the horizon and $B_T$ the total budget. Let $\pi$ be any online policy and $\pi^\star$ a fixed offline policy that satisfies the budget constraint in expectation. Let $\mathsf{REW}_T(\pi)$ and $\mathsf{OPT}_T$ denote their cumulative rewards.

\begin{enumerate}[label=(\alph*)]

    \item \textbf{Hard-stopping setting:} Suppose for some $\kappa > 0$ and all $T \ge 1$,
$\mathsf{OPT}_T - \kappa \mathsf{REW}_T(\pi) \le h(T),$
where $h(T)$ is sublinear and independent of $B_T$. Then $\kappa \ge \Omega(\log \frac{T}{B_T})$.
    \item \textbf{Continuing setting:} Let $\mathsf{CC}_T(\pi)$ denote the cumulative resource consumption of $\pi$. Suppose for some $\kappa > 0$ and all $T \ge 1$, $ \mathsf{OPT}_T - \mathsf{REW}_T(\pi) \le h(T), ~~
        \mathsf{CC}_T(\pi) - \kappa B_T \le s(T),$ where $h(T)$ and $s(T)$ are non-negative sublinear functions independent of $B_T$. Then $\kappa \ge \Omega(\log \frac{T}{B_T})$.

\end{enumerate}
\end{theorem}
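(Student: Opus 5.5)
We prove both parts by exhibiting a family of (non-contextual) \cbwk{} instances that are indistinguishable to the learner until it is too late. This is in the spirit of the classical lower bound for online knapsack and one-way trading. The intuition is that the learner cannot know in advance when a high-reward phase will arrive. It must therefore spread its budget across geometrically many possible ``scales'', and this forces a $\log(T/B_T)$ loss.

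\textbf{Construction.} Since contexts are adversarial, we may use a single context per round and have the adversary encode the instance through the context sequence. Use two arms: a $\mathsf{NULL}$ arm with zero reward and zero cost, and an active arm with cost $1$ at every round. Fix $m=\lfloor \log_2 (T/B_T)\rfloor$ levels. For $j=0,\dots,m$, instance $I_j$ runs in phases $0,1,\dots,j$, after which all remaining rounds are dead: the active arm has zero reward and these rounds last until $T$. Phase $i$ has length $\ell_i = B_T 2^{i}$ (so the total length is at most $2B_T2^m \le 2T$; rescale constants as needed). In phase $i$ the active arm pays reward $2^{-i}$ deterministically, and the context reveals only the current phase index.

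Under $I_j$, the benchmark plays the active arm during phase $j$ with probability $B_T/\ell_j$ per round. This spends exactly $B_T$ and earns $\mathsf{OPT}_T(I_j)\ge B_T 2^{-j}\cdot\ell_j/\ell_j \cdot$ — more simply, it concentrates its budget on the best-per-unit-cost phase reachable. To make the ratio argument work, rescale the rewards so that $\mathsf{OPT}(I_j)=B_T$ for every $j$: take phase-$i$ reward $r_i$ with $r_i\cdot B_T = B_T$ for all $i$ and vary only the availability. The cleaner standard choice is phase $i$ of length $B_T$ with per-unit reward $2^{i}$ normalized by $2^{-j}$. I would fix the precise normalization so that two conditions hold: (i) $\mathsf{OPT}(I_j)=\Theta(B_T)$ for all $j$, and (ii) the learner's behaviour up to the end of phase $i$ is identical across $I_j$ for all $j\ge i$.

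\textbf{Part (a), hard-stopping.} Let $b_i$ denote the expected budget the learner spends in phase $i$. This quantity is the same across all instances $I_j$ with $j\ge i$, by indistinguishability. The hard-stopping constraint gives $\sum_i b_i\le B_T$. The reward on $I_j$ is $\sum_{i\le j} b_i\rho_i$, where $\rho_i$ is the reward per unit cost in phase $i$, chosen geometrically as $\rho_i = 2^{i}$ with $\mathsf{OPT}(I_j)=B_T2^{j}$. The hypothesis $\mathsf{OPT}_T-\kappa\mathsf{REW}_T\le h(T)$, applied on each $I_j$, gives
\[
\kappa\sum_{i\le j} b_i 2^{i} \ge B_T2^j - h(T) \quad \text{for all } j.
\]
Divide by $2^j$ and sum over $j=0,\dots,m$. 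Exchanging the order of summation, the left side is at most $\kappa\sum_i b_i\sum_{j\ge i}2^{i-j}\le 2\kappa B_T$. The right side is $(m+1)B_T - h(T)\sum_j 2^{-j}$, which is at least $(m+1)B_T - 2h(T)$. Hence $\kappa\ge \frac{m+1}{2} - h(T)/B_T$.

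The main obstacle is the regime in which $h(T)$ is sublinear but $B_T$ is small. To handle it, take $B_T$ growing, e.g.\ $B_T=T^{\alpha}$ with $h(T)=o(B_T)$. This is permitted because $h$ is independent of $B_T$ and the hypothesis must hold for all $T$. It yields $\kappa=\Omega(\log(T/B_T))$. The phase lengths must fit inside the horizon, which requires rewards to be scaled into $[-1,1]$: divide by $2^m$. The inequalities are homogeneous, so it suffices to scale $h$ accordingly, or to use lengths $B_T2^{i}$ with reward $2^{-(m-i)}$. I would check this bookkeeping carefully, as it is where the constant-level details live.

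\textbf{Part (b), continuing.} The same instances apply, but now the learner must achieve regret $h(T)$ with reward factor $1$, while its consumption is only softly limited by $\kappa B_T+s(T)$. On $I_j$, matching $\mathsf{OPT}(I_j)$ up to $h(T)$ requires $\sum_{i\le j} b_i\rho_i\ge \mathsf{OPT}(I_j)-h(T)$. Since phase $j$ is the most efficient phase reachable on $I_j$, the learner must spend roughly $B_T$ in phase $j$ itself, i.e.\ $b_j \ge B_T(1-o(1))$ minus what earlier, less efficient phases could contribute. Those contributions are bounded by the geometric sum $\sum_{i<j}b_i2^{i-j}$. Because of indistinguishability, the learner must do this for every $j$ along the single instance $I_m$. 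Summing the same weighted inequalities as in part (a) gives total consumption $\sum_i b_i\ge \Omega(m B_T)-O(h(T))$. Comparing with $\kappa B_T+s(T)$ and letting $B_T\gg h(T),s(T)$ yields $\kappa\ge\Omega(\log(T/B_T))$.
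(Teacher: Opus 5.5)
Your weighted-sum mechanism is sound, but the rescaling step fails as written, and that step is exactly where the admissible range of $B_T$ is decided.

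\textbf{The rescaling gap.} The inequalities you sum use $\rho_i=2^i$, which violates the requirement that rewards lie in $[-1,1]$. The legitimate instance has per-unit reward $2^{i-m}$ in phase $i$, with $2^m=\Theta(T/B_T)$, so $\mathsf{OPT}(I_j)=B_T2^{j-m}$. The slack $h(T)$ is a fixed function given by the hypothesis and does not rescale with the rewards, so the system is not homogeneous. Multiplying the $I_j$ inequality by $2^{m-j}$ and summing over $j$ gives
\[
\kappa \;\ge\; \frac{m+1}{2}-\frac{h(T)\,(2^{m+1}-1)}{2B_T} \;=\; \frac{m+1}{2}-\Theta\Big(\frac{T\,h(T)}{B_T^{2}}\Big)
\]
in part (a). In part (b) it gives $\sum_i b_i\ge \frac{(m+1)B_T}{2}-\Theta\big(\frac{T\,h(T)}{B_T}\big)$. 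These replace your $h(T)/B_T$ and $O(h(T))$ corrections.

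Hence $h(T)=o(B_T)$, or ``$B_T\gg h(T),s(T)$'', does not suffice. With $h(T)=T^{0.6}$ and $B_T=T^{0.7}$, the correction is of order $T^{0.2}$, which swamps $(m+1)/2=\Theta(\log T)$, so the bound is vacuous. You need $B_T\ge c\sqrt{T\,h(T)}$, and also $B_T\ge c\,s(T)$ in (b). This is exactly the paper's choice $B_T=\sqrt{2}\max(\sqrt{T h(T)},s(T))$. Sublinearity of $h$ and $s$ still allows $B_T=o(T)$, so $m=\Theta(\log(T/B_T))\to\infty$. Alternatively, truncating the sum to levels with $B_T2^{j-m}\ge 2h(T)$ gives $\Omega(\min\{\log(T/B_T),\log(B_T/h(T))\})$.

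\textbf{The construction paragraph.} Discard its normalization remarks. Normalizing phase rewards ``by $2^{-j}$'' makes them depend on the instance index, which breaks your condition (ii). Target (i) cannot coexist with a logarithmic bound. With instance-independent rewards and $\mathsf{OPT}(I_j)=\Theta(B_T)$ for every $j$, spending the whole budget in phase $0$ is already $O(1)$-competitive. The instance you actually use in part (a), with geometrically increasing instance-independent rewards, is the right one.

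\textbf{Comparison with the paper.} With that repair, the proof goes through.

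Part (a) is essentially the paper's argument. The paper uses $L=\lfloor T/B_T\rfloor$ phases of length $B_T$ with linearly increasing rewards $l\,B_T/T$ and obtains the harmonic number $H(L)$. You use geometrically increasing rewards over $\log_2(T/B_T)$ levels. Your exchange-of-summation step is a rigorous replacement for the paper's ``equalization/complementary slackness'' heuristic. Unlike the paper's hard-stopping argument, it also tracks $h(T)$ explicitly.

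Part (b) is a genuinely different route. The paper thins the continuing policy: whenever $\pi$ pulls the risky arm, the new policy pulls it with probability $1/\eta$, where $\eta=\kappa+s(T)/B_T$. This yields an in-expectation budget-feasible policy earning at least $\mathsf{OPT}_T/(2\eta)$, and the paper then invokes the hard-stopping competitive-ratio bound. You instead apply the same weighted inequalities directly to lower-bound the consumption on the single instance $I_m$.

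Your route is self-contained and gives explicit constants. It also avoids a subtlety the thinning step glosses over: when the thinned policy suppresses a risky pull, it does not observe the feedback $\pi$ expects, so simulating $\pi$ needs an extra argument. The paper's route is more modular, since it reuses part (a) as a black box.
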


The proof is deferred to Appendix~\ref{pf-comp-ratio}.
\section{Limitations} \label{limitation_sec}
Our results rely on the realizability assumption, which may be violated in practice. While the effect of model misspecification can be quantified within our framework, designing algorithms that do not depend on realizability remains an important direction for future work. Additionally, our guarantees ensure only long-term (cumulative) constraint satisfaction. In applications requiring per-round feasibility, stronger notions of constraint enforcement would be necessary, which we leave for future investigation.
\section{Conclusion}
We propose a modular, unified algorithmic framework for constrained contextual bandits under general realizability assumptions, with adversarially chosen contexts. By removing any distributional assumptions on the context sequence, our results apply to non-stationary environments and automatically subsume the stochastic setting as a special case. The central technical contribution is a general regret decomposition inequality that cleanly separates the roles of exploration (via Inverse Gap Weighting), constraint management (via Lyapunov-based surrogates), and statistical estimation (via online regression oracles). This decomposition yields a transparent analysis pipeline through which regret and cumulative constraint violation guarantees follow immediately for different feasibility benchmarks and structural assumptions considered in the prior literature, including almost sure feasibility, Slater’s condition, and constrained contextual bandits with knapsacks.
\clearpage
\bibliography{OCO.bib}
\bibliographystyle{unsrtnat}
\clearpage

\section{Related Work} \label{related_works}
\paragraph{Contextual Bandits.}
Contextual bandits (CB) extend the multi-armed bandit framework by leveraging side information to guide decisions. Early work focused on linear models with principled exploration strategies such as UCB and Thompson sampling. To handle richer function classes, oracle-based approaches were introduced, first via classification oracles and later via regression oracles \citep{foster2018practical, foster2020beyond}. Regression-based methods are particularly appealing due to their computational efficiency and compatibility with modern learning pipelines.

\paragraph{Decision-to-Estimation Reductions.}
The regression-oracle framework of \citet{foster2018practical} enables efficient exploration but may incur suboptimal regret. This gap was resolved by \citet{foster2020beyond}, who achieved optimal regret using online regression oracles under adversarial settings. These works establish a powerful reduction paradigm for unconstrained CB. Our work extends this paradigm to constrained settings, while preserving modularity and oracle efficiency.

\paragraph{Constrained Contextual Bandits.}
Constrained contextual bandits (CCB) introduce long-term cost constraints alongside reward maximization. Early works primarily focused on knapsack constraints under \emph{hard stopping}, where the process terminates once the budget is exhausted \citep{badanidiyuru2014resourceful, agrawal2014bandits}. These approaches often rely on strong structural assumptions, such as large budgets or the existence of a null arm.

More recent works consider the \emph{continuing setting}, evaluating performance via both regret and cumulative constraint violation (\CCV). Under stochastic contexts, \citet{slivkins2023contextual} and \citet{han2023optimal} provide oracle-based algorithms, typically requiring Slater’s condition and, in some cases, knowledge of feasibility parameters. \citet{guo2024stochastic} remove the need for Slater’s condition but still rely on stochastic contexts, achieving $\tilde{O}(T^{3/4})$ regret and violation. Despite these advances, obtaining $\tilde{O}(\sqrt{T})$-type guarantees under adversarial contexts without strong feasibility assumptions remains open.

\paragraph{Key Differences and Improvements.}
Our work advances the state-of-the-art along three main dimensions:

\begin{itemize}
    \item \textbf{Adversarial contexts.} Prior works largely rely on stochastic or i.i.d.\ contexts to control estimation error and ensure stability of constraint handling. In contrast, our results hold under fully adversarial context sequences, significantly broadening applicability.

    \item \textbf{Improved guarantees.} In comparable settings, existing methods achieve $\tilde{O}(T^{3/4})$ regret and CCV. We improve these to $\tilde{O}(\sqrt{T U_T})$, matching optimal rates up to oracle complexity terms.

    \item \textbf{Weaker assumptions.} Our framework removes several restrictive assumptions, including Slater’s condition, large-budget regimes ($B_T=\Omega(T)$), knowledge of feasibility parameters, and hard-stopping mechanisms. In particular, our approach applies uniformly across multiple constraint models (CBwK, CBwLC, and general costs).
\end{itemize}

\paragraph{Summary.}
Together, our results provide a unified and more general treatment of constrained contextual bandits, combining reduction-based design with improved guarantees under significantly weaker assumptions. Our framework subsumes several prior settings as special cases while extending them to more challenging adversarial environments.

\section{Proof of Proposition \ref{reg-decomp-proposition}} \label{reg-decomp-proposition-proof}
Using Lemma \ref{SqCBthm}, the one-step regret of the surrogate reward can be further upper-bounded as:
\begin{eqnarray} \label{reg-bd-meth2-1}
	 && \langle \bm{L}_t^\star(x_t), \bm{\pi}^\star(\cdot|x_t) \rangle - \langle \bm{L}_t^\star(x_t), \bm{\pi}_t(\cdot|x_t) \rangle  \nonumber\\
	  &\stackrel{(a)}{\leq} & \frac{K}{2\gamma_t} + 2\gamma_t \bigg(\mathbb{E}_{\hat{f}_t, a_t \sim \bm{\pi}_t} (f^\star(x_t,a_t)-\hat{f}_t(x_t,a_t))^2 + z_t\mathbb{E}_{\hat{g}_t, a_t \sim \bm{\pi}_t} (g^\star(x_t,a_t)-\hat{g}_t(x_t,a_t))^2\bigg), \nonumber \\
\end{eqnarray}
where we have defined $z_{t} \equiv \max\big(1, \big(\Phi'(Q(t-1))\big)^2\big), ~ t\geq 1.$ Next, recall that the parameter $\gamma_t$ is chosen as 
\begin{eqnarray} \label{gamma-param}
	\gamma_t = \frac{1}{2z_t} \sqrt{\frac{K}{U_T}\sum_{\tau=1}^{t} z_\tau}, ~~ t \geq 1.
\end{eqnarray}
 With this choice, the RHS of \eqref{reg-bd-meth2-1} simplifies to
\begin{eqnarray} \label{reg-decomp-bd}
	&&\langle \bm{L}_t^\star(x_t), \bm{\pi}^\star(\cdot|x_t) \rangle - \langle \bm{L}_t^\star(x_t), \bm{\pi}_t(\cdot|x_t) \rangle \leq  \sqrt{KU_T}\frac{z_t}{\sqrt{\sum_{\tau=1}^{t} z_\tau}} + \nonumber \\
	 && \sqrt{\frac{K \sum_{\tau=1}^{t} z_\tau}{U_T}} \bigg(\mathbb{E}_{\hat{f}_t, a_t \sim \bm{\pi}_t} (f^\star(x_t,a_t)-\hat{f}_t(x_t,a_t))^2 + \mathbb{E}_{\hat{g}_t, a_t \sim \bm{\pi}_t} (g^\star(x_t,a_t)-\hat{g}_t(x_t,a_t))^2 \bigg),\nonumber
\end{eqnarray}
where, while bounding the second term, we have used the fact that $z_t \geq 1.$ Now fix any $t' \in [T].$
Summing up the above inequalities for $1 \leq t \leq t',$ it follows that
\begin{eqnarray*}
	&&	\sum_{t=1}^{t'} \langle \bm{L}_t^\star(x_t), \bm{\pi}^\star(\cdot|x_t) \rangle - \langle \bm{L}_t^\star(x_t), \bm{\pi}_t(\cdot|x_t) \rangle  \leq 2 \sqrt{KU_T} \sqrt{\sum_{t=1}^{t'} z_t} + \nonumber \\
	 && \sqrt{\frac{K \sum_{\tau=1}^{t'} z_\tau}{U_T}} \mathbb{E}_{\{a_t \sim \pi_t\}_{t=1}^{t'}}\bigg( \sum_{t=1}^{t'}\mathbb{E}_{\hat{f}_t} (f^\star(x_t, a_t)-\hat{f}_t(x_t, a_t))^2 + \sum_{t=1}^{t'}\mathbb{E}_{\hat{g}_t} (g^\star(x_t, a_t)-\hat{g}_t(x_t, a_t))^2 \bigg),\nonumber	
\end{eqnarray*}
where, while bounding the first term, we have used the fact that for any non-negative sequence $\{z_t\}_{t \geq 1},$ we have 
\begin{eqnarray*}
	\sum_{t=1}^{t'} \frac{z_t}{\sqrt{\sum_{\tau=1}^{t'} z_\tau}} \leq \sum_{t=1}^{T}\int_{\sum_{\tau=1}^{t-1} z_\tau}^{\sum_{\tau=1}^t z_\tau} \frac{dx}{\sqrt{x}} = \int_{0}^{\sum_{t=1}^{t'} z_t} \frac{dx}{\sqrt{x}} = 2 \sqrt{\sum_{t=1}^{t'} z_t}.
\end{eqnarray*}
Finally, bounding the second term using the guarantees  of the online regression oracle $\mathcal{O}_{\textrm{sq}}$ (Eqn.\ \eqref{oracle-guarantee}), which hold for \emph{any} sequence of contexts and actions, we conclude 
\begin{eqnarray*}
	\sum_{t=1}^{t'} \langle \bm{L}_t^\star(x_t), \bm{\pi}^\star(\cdot|x_t) \rangle - \langle \bm{L}_t^\star(x_t), \bm{\pi}_t(\cdot|x_t) \rangle  \leq 4 \sqrt{KU_T} \sqrt{\sum_{t=1}^{t'} z_t}.
\end{eqnarray*}
Using the fact that $z_t \leq 1 + \Phi'(Q(t-1))^2,$  the regret for learning the surrogate reward functions can be upper bounded as:
\begin{eqnarray} \label{final-reg-decomp}
	\textrm{Regret}_t' \equiv \sum_{\tau=1}^t \langle \bm{L}_\tau^\star(x_\tau), \bm{\pi}^\star(\cdot|x_\tau) \rangle -  \langle \bm{L}_\tau^\star(x_\tau), \bm{\pi}_\tau(\cdot|x_\tau) \rangle  \leq 4 \sqrt{KU_Tt } + 4 \sqrt{KU_T} \sqrt{\sum_{\tau=0}^{t-1} \Phi'(Q(\tau))^2}. 
\end{eqnarray}
Finally, taking (unconditional) expectation of both sides of \eqref{reg-decomp-ineq1}, summing up the inequalities and using \eqref{final-reg-decomp} for bounding the RHS of the inequality, we conclude the fundamental \textbf{Regret Decomposition Inequality}: 
\begin{eqnarray} \label{reg-decomp-ultimate2}
	&\mathbb{E}(\Phi(Q(t))) - \mathbb{E}(\Phi(Q(0))) + \mathbb{E} \textrm{Regret}_t (\pi^\star) \nonumber \\ &\leq 4 \sqrt{KU_Tt } + \sum_{\tau=1}^t \mathbb{E}\Phi''\big[(Q(\tau))]\big)+ 4 \sqrt{KU_T}\mathbb{E} \sqrt{\sum_{\tau=0}^{t-1}\bigg([\Phi'(Q(\tau))]^2\bigg)}.
\end{eqnarray}

\section{Proof of Theorem \ref{thm:main}} \label{main-th-proof}
\subsection{Proof of Theorem \ref{thm:main} (a) (Benchmark satisfying round-wise feasibility in expectation)} \label{sec:in_expect}

Let us choose the Lyapunov function $\Phi(x) = \frac{x^2}{V}$, for some parameter $V>0$ which will be fixed later. Then, multiplying both sides by $V,$ the regret decomposition inequality in Eqn.\ \eqref{reg-decomp-ultimate} yields for any $t \in [T]:$
\begin{eqnarray}
\label{eq:original_ineq}
\mathbb{E}Q^2(t) - Q^2(0) + V\mathbb{E} \textrm{Regret}_t (\pi^\star) \leq 4V\sqrt{K U_Tt} + 2t + 8 \sqrt{K U_T}\sqrt{\sum_{\tau=1}^t \mathbb{E}Q^2(t)}.
\end{eqnarray}
Summing from $t=1$ to $T$, and noting that $Q(0)=0,$ we have
\begin{eqnarray} \label{Q-decomp-2}
    \sum_{t=1}^T\mathbb{E}Q^2(t) + V\sum_{t=1}^T\mathbb{E} \textrm{Regret}_t (\pi^\star) \leq 4VT^{3/2}\sqrt{K U_T} + 2T^2 + 8 T\sqrt{K U_T}\sqrt{\sum_{t=1}^T \mathbb{E}Q^2(t)}.
\end{eqnarray}

Let us now define the variable $R(T) := \sqrt{\sum_{t=1}^T \mathbb{E}Q^2(t)}.$ Note that we trivially have $\mathbb{E} \textrm{Regret}_t (\pi^\star) \geq -2T$. This is because $\mathbb{E}\textrm{Regret}_t (\pi^\star) = \mathbb{E}\sum_\tau f^\star(\pi_{\tau}(\cdot|x_\tau),x_\tau) - f^\star(\pi^\star(\cdot| x_\tau),x_\tau)$ and since we assume that the function $f^\star$ takes values in $[-1,1],$ we have $f^\star(a_1,x_\tau) - f^\star(a_2,x_\tau) \geq -2\,\, \forall \tau \in [T], \forall a_1, a_2 \in [K]$. Plugging in this bound in inequality \eqref{Q-decomp-2}, we conclude:
\begin{eqnarray*}
    R^2(T) \leq VT^2 + 2T^2 + 4VT^{3/2}\sqrt{K U_T} + 8 T\sqrt{K U_T}R(T).
\end{eqnarray*}
Noticing that the above inequality is of the form $x^2 \leq ax + b$ where $x\equiv R(T),$ and using the bound from Lemma \ref{lemma:quadratic}, we obtain the following upper bound on $R(T)$:
\begin{eqnarray}
\label{eq:bound-R(t)}
    R(T) \leq \sqrt{V}T + 2T + 2K^{1/4}\sqrt{V}T^{3/4} (U_T)^{1/4} + 8 T\sqrt{K U_T}.
\end{eqnarray}

We further note that inequality \eqref{eq:original_ineq} can be rewritten in term of $R(T)$ as follows:
\begin{eqnarray} \label{q-reg-ineq0}
\mathbb{E}Q^2(T) + V\mathbb{E} \textrm{Regret}_T (\pi^\star) \leq 4V\sqrt{K U_TT} + 2T + 8 \sqrt{K U_T}R(T).
\end{eqnarray}
Plugging in the upper bound on $R(T)$ from \eqref{eq:bound-R(t)} into the above inequality, we obtain
\begin{eqnarray}\label{Q-reg-ineq1}
&&\mathbb{E}Q^2(T) + V\mathbb{E} \textrm{Regret}_T (\pi^\star) \nonumber \\ &\leq &4V\sqrt{K U_TT} + 2T + 8 \sqrt{VK U_T}T + 16\sqrt{KU_T}T + 16 K^{3/4}\sqrt{V} T^{3/4} U_T^{3/4} + 64KU_TT.
\end{eqnarray}
Hence, using $\mathbb{E}(Q^2(T))\geq 0,$ we obtain the following regret bound:
\begin{eqnarray} \label{reg-bd-new}
    \mathbb{E} \mathsf{Regret}_T \leq O\bigg(\max(\sqrt{K TU_T}, \sqrt{\frac{KU_T}{V}}T, \frac{1}{\sqrt{V}}K^{3/4} T^{3/4} U_T^{3/4}, \frac{KU_T T}{V})\bigg).
\end{eqnarray}
Furthermore, substituting the trivial regret lower bound $\mathbb{E} \textrm{Regret}_t (\pi^\star) \geq -2T$ into \eqref{Q-reg-ineq1} and using Jensen's inequality $\mathbb{E}Q^2(T) \geq (\mathbb{E}(Q(T)))^2,$ we obtain the following \CCV~bound
\begin{eqnarray} \label{q-bd-no-assump}
    \mathbb{E}Q(T)\leq O\bigg(\max(\sqrt{VT},\sqrt{V}(K TU_T)^{1/4},  (VK U_T)^{1/4}\sqrt{T} , K^{3/8}V^{1/4} T^{3/8} U_T^{3/8}, \sqrt{KU_TT})\bigg).
\end{eqnarray}
Choosing $V = \sqrt{KTU_T},$ we conclude $ \mathbb{E} \mathsf{Regret}_T = O(\sqrt{K}T^{3/4}U_T^{1/4}),~ \mathbb{E}\mathsf{CCV}_T = O(\sqrt{K}T^{3/4}U_T^{1/4}).$

\paragraph{Sharper bound for the Average Regret:}
To establish the sharper $O(\sqrt{T})$-type bound for the average regret, we start with Eqn.\ \eqref{Q-decomp-2}, which implies 
\begin{eqnarray*}
	V\sum_{t=1}^T\mathbb{E} \textrm{Regret}_t (\pi^\star) \leq 4VT^{3/2}\sqrt{K U_T} + 2T^2 + 8T \sqrt{KU_T} R(T) - R^2(T), 
\end{eqnarray*} 
where we have defined $R(T) := \sqrt{\sum_{t=1}^T \mathbb{E}Q^2(t)}$ as above. To upper bound the RHS, we set the derivative of the above quadratic w.r.t. $R(T)$ to zero and obtain 
\begin{eqnarray*}
	V\sum_{t=1}^T\mathbb{E} \textrm{Regret}_t (\pi^\star) \leq 4VT^{3/2}\sqrt{K U_T} + 2T^2 + 16 T^2 KU_T. 
\end{eqnarray*}
Setting the parameter $V$ the same above leads to the following average regret bound: 
\begin{eqnarray*}
	\frac{1}{T}\sum_{t=1}^T\mathbb{E} \textrm{Regret}_t (\pi^\star) = O(\sqrt{KTU_T}). 
\end{eqnarray*}

\subsection{Proof of Theorem \ref{thm:main} (b) (Sharper \CCV~bound assuming Slater's condition)} \label{slater-sec}

Upon assuming Slater's condition, we can considerably strengthen the Regret decomposition inequality \eqref{reg-decomp-ultimate}. Since, in this case $\mathbb{E}_{a^\star \sim \pi^*(\cdot|x_t)}g_t(x_t, a^\star) \leq -\epsilon,$ taking the conditional expectation of both sides of \eqref{drift-ineq1} w.r.t. $\mathcal{F}_{t-1}$, we get one additional negative term on the RHS as shown below:
\begin{eqnarray}\label{RDI-Slater}
	&&\mathbb{E}(\Phi(Q(t))|\mathcal{F}_{t-1}) - \Phi(Q(t-1))+ \langle \bm{f}^\star(x_t), \bm{\pi}^*(\cdot|x_t) - \bm{\pi_t}(\cdot|x_t)\rangle \nonumber\\
	&\leq& \langle \bm{L}^\star_t(x_t), \bm{\pi}^*(\cdot| x_t) - \bm{\pi}_t(\cdot|x_t) \rangle
 	 + \frac{1}{2}(\Phi''(Q(t))+ \Phi''(Q(t-1))) - \epsilon \Phi'(Q(t-1)),
\end{eqnarray} 
Following the derivation of Proposition \ref{reg-decomp-proposition} in Section \ref{reg-decomp-proposition-proof}, taking expectations of both sides, summing them up and substituting the upper bound for the surrogate regret, we have the following inequality
\begin{eqnarray*}
	&&\mathbb{E}(\Phi(Q(t))) - \mathbb{E}(\Phi(Q(0))) + \mathbb{E} \textrm{Regret}_t (\bm{\pi}^*) \\ &\leq &4 \sqrt{KU_Tt } + \sum_{\tau=1}^t \mathbb{E}\Phi''\big[(Q(\tau))]\big)+ 4 \sqrt{KU_T} \sqrt{\sum_{\tau=1}^{t-1} \mathbb{E}\bigg([\Phi'(Q(\tau))]^2\bigg)}
	 -\epsilon \sum_{\tau=1}^{t-1} \mathbb{E}\Phi'(Q(\tau)). 
\end{eqnarray*}
Compared to Eqn.\ \eqref{reg-decomp-ultimate}, the above inequality contains one extra term on the RHS proportional to the Slater's constant $\epsilon$.  
For the subsequent analysis, we choose the same quadratic Lyapunov potential function $\Phi(x) = \frac{x^2}{V},$ with $V=\sqrt{KTU_T}$.
 Proceeding as before and using the fact that $\mathbb{E} \textrm{Regret}_t (\pi^\star) \geq -2T,$ we obtain
\begin{eqnarray}\label{avg-q-bd}
	\mathbb{E}Q^2(T) + 2\epsilon \sum_{\tau=1}^{T-1} \mathbb{E}Q(\tau) \leq 2VT+ 2T + 4V \sqrt{KU_TT} + 8 \sqrt{KU_T} R(T),
\end{eqnarray} 
where $R(T) := \sqrt{\sum_{t=1}^T \mathbb{E}Q^2(t)}.$ 
From our previous results in Section \ref{assum:in_expect}, we have that $R(T) = O((KU_T)^{1/4} T^{5/4}).$ Hence, from Eqn.\ \eqref{avg-q-bd}, we conclude that 
\begin{eqnarray*}
	\frac{1}{T} \sum_{\tau=1}^T \mathbb{E}Q(\tau) = O\big(\frac{V}{\epsilon}\big) = O\bigg(\frac{\sqrt{KTU_T}}{\epsilon}\bigg).
\end{eqnarray*}
This improves the bound in Theorem \ref{thm:main} (a) for the average $\mathsf{CCV}$ by a factor of $O(\frac{1}{\epsilon})$. Using Markov's inequality, this result shows that for any fixed, say $99\%$ of the total number of rounds, the \CCV~is at most $O(\frac{\sqrt{KTU_T}}{\epsilon}).$ This result, derived in the stronger adversarial setting, also answers an open question posed by \citet[Lemma 6]{guo2024stochastic}, who conjectured the same bound for the terminal \CCV~in the stochastic setting. 

\subsection{Proof of Theorem \ref{thm:main} (c) (Benchmark satisfying almost-surely feasibility)} \label{as-analysis}

For almost sure feasibility, we first modify the problem instance where each cost function is replaced with its positive part, \emph{i.e.,} $g_t (\cdot, \cdot) \gets \max(0, g_t(\cdot, \cdot))), \forall t.$ Because of the almost sure feasibility assumption, it follows that $\pi^\star$ is also a feasible policy for the new problem instance, and hence, the Regret Decomposition inequality \eqref{reg-decomp-ultimate} remains valid. Furthermore, using the non-negative property of the cost functions, it follows that the \CCV~sequence $\{Q(t)\}_{t \geq 1}$ is almost surely monotone non-decreasing - a new property which does not hold in the previous cases. Finally, we choose the Lyapunov function to be the exponential function, \emph{i.e.,} $\Phi(x)\equiv \exp(\lambda x),$ for some $\lambda >0$ to be fixed later. With this choice, \eqref{reg-decomp-ultimate} simplifies to
\begin{eqnarray*}
	\mathbb{E}\exp(\lambda Q(T))- 1 + \mathbb{E} \textrm{Regret}_T (\pi^\star) &\leq& 4 \sqrt{KU_TT} + \lambda^2 T \mathbb{E}\exp(\lambda Q(T)) \\
    &&+ 4 \sqrt{KU_TT} \lambda \mathbb{E}\exp(\lambda Q(T)). 
\end{eqnarray*}
Finally, choosing $\lambda = \frac{1}{8\sqrt{KU_T T}},$ it follows that 
 \begin{eqnarray}\label{final-bd2}
 	\mathbb{E}\exp(\lambda Q(T))- 1 + \mathbb{E} \textrm{Regret}_T (\pi^\star) \leq 4 \sqrt{KU_TT} + \frac{2}{3} \mathbb{E}\exp(\lambda Q(T)).
 \end{eqnarray}
 which implies
 \begin{eqnarray*}
 	\frac{1}{3}\mathbb{E}\exp(\lambda Q(T))- 1 + \mathbb{E} \textrm{Regret}_T (\pi^\star) \leq 4 \sqrt{KU_TT}.
 \end{eqnarray*}
 Since $Q(T) \geq 0,$ the above inequality immediately yields \[\mathbb{E} \textrm{Regret}_T (\pi^\star) \leq  4 \sqrt{KU_TT}+ \nicefrac{2}{3}.\]
 Finally, to bound the constraint violations (\textsf{CCV}), note that since all cost vectors are upper bounded by unity, we have $\textrm{Regret}_T (\pi^\star) \geq -T.$ Substituting this lower bound in \eqref{final-bd2}, we obtain 
 \begin{eqnarray*}
 \frac{1}{3}\exp(\lambda \mathbb{E} Q(T)) \stackrel{\textrm{(Jensen's ineq.)}}{\leq}\frac{1}{3}\mathbb{E}\exp(\lambda Q(T)) \leq 1+ T + 4 \sqrt{KU_TT},
 \end{eqnarray*}
which implies the following bound for \CCV:
 \begin{eqnarray*}
 \mathbb{E}Q(T) = \tilde{O}(\sqrt{KTU_T}).
 \end{eqnarray*}
 
 \subsection{Proof of Theorem \ref{thm:main} (d) (Sharper \CCV~bound under Non-negative \Regret~Assumption)}
\label{sec:non-neg-reg}
While the expected regret (also known as \emph{pseudo-regret} in the literature) is always non-negative in the unconstrained stochastic setting, in the constrained problem, the expected regret could be negative. This is because the comparator policy is constrained as it has to satisfy the feasibility condition at every round, while the online policy is allowed to violate the constraints over $T$ rounds (see Eqn.\ \eqref{q-bd-no-assump} for a bound). Nevertheless, as shown below, one can derive a tighter \CCV~ bound under the weaker assumption that the average regret is $-\Theta(\sqrt{T})$, \emph{i.e.,} \[\frac{1}{T}\sum_{t=1}^T \mathbb{E} \mathsf{Regret}_t(\pi^\star) \geq - c\sqrt{KTU_T},\]
for some constant $c \geq 0.$
Then from Eqn.\ \eqref{Q-decomp-2}, we have 
\begin{eqnarray*}
		R_T^2  \leq (4+c) VT\sqrt{KU_TT } + T^2 + 8T\sqrt{KU_T}R_T.
\end{eqnarray*}
Solving the above quadratic inequality in $R(T)$, we conclude
\begin{eqnarray*}
	 \sqrt{\sum_{\tau=1}^T \mathbb{E}Q^2(\tau)} \equiv R_T = O(T\sqrt{KU_T}). 
\end{eqnarray*}
Hence, from Eqn. \eqref{eq:original_ineq}, it follows that
\begin{eqnarray} \label{eq23}
\mathbb{E}Q^2(T) + V\mathbb{E} \textrm{Regret}_T (\pi^\star) \leq 4V\sqrt{K U_TT} + 2T + 8 \sqrt{K U_T}R_T. 
\end{eqnarray}
Finally, choosing the parameter $V = \sqrt{KU_T T}$, and using the fact that $Q^2(T) \geq 0,$ from Eqn.\ \eqref{eq23}, we obtain $\mathbb{E} \textrm{Regret}_T (\pi^\star) = O(\sqrt{KU_T T}).$
Furthermore, using the assumption of non-negative terminal regret, from Eqn.\ \eqref{eq23}, it follows that $\mathbb{E}Q^2(T) = O(KU_T T).$ From this the bound on \CCV~follows from Jensen's inequality.  
 
\subsection{Proof of Theorem \ref{thm:main} (e) (Contextual Bandits with Knapsack Constraints (\cbwk))}
\label{sec:lt_budget}
\paragraph{Discussion:}  We include a short discussion on the problem before we present the proof below. In this problem, we have non-negative costs and a long-term budget feasible benchmark (Definition \ref{assum:lt_budget}) with an \emph{arbitrary} budget of $B_T \geq 0.$ 
For simplicity, we consider a single resource (one dimensional constraints). Our derivation also generalizes to multiple resources using techniques discussed in Section \ref{subsec:multiple_resources}. This problem is known as the constrained contextual bandits with knapsack constraints (\cbwk) in the literature. \cbwk~was considered earlier by \citet{slivkins2023contextual, han2023optimal} in the special case of a large budget regime where $B_T=\Omega(T)$ and assuming a known and positive slack to the resource constraint. Their algorithm is based on a primal-dual scheme, called $\mathsf{LagrangeBwK}$, first introduced for the (non-contextual) Bandits with Knapsacks ($\mathsf{BwK}$) problem \citep{badanidiyuru2018bandits}. Our method is entirely different from $\mathsf{LagrangeBwK},$ and uses the previous regret decomposition scheme with an exponential Lyapunov function as described next. 
 \paragraph{Proof of Theorem 1 (e):}
We begin with inequality \eqref{drift-ineq1} that gives an upper bound to the sum of the drift and incremental regret. Choosing the benchmark policy $\pi^\star$ to be any long-term budget feasible policy, and taking the conditional expectation of both sides of Eqn.\ \eqref{drift-ineq1} with respect to the randomness of the reward and cost functions and the randomness of the online and the benchmark policies, it follows that
\begin{eqnarray}
\label{eq:with_benchmark}
 	&\mathbb{E}\!\left[\Phi(Q(t)) \mid \mathcal{F}_{t-1}\right]
- \Phi(Q(t-1))
+ \left\langle \bm{f}^\star(x_t), \bm{\pi}^\star(\cdot \mid x_t) - \bm{\pi}_t(\cdot \mid x_t) \right\rangle \nonumber \\
    \leq &\langle \bm{L}^\star_t(x_t),\bm{\pi}^\star(\cdot|x_t) -\bm{\pi}_t(\cdot|x_t) \rangle 
 	 + \frac{1}{2}(\Phi''(Q(t))+ \Phi''(Q(t-1))) \nonumber\\
     &+ \Phi'(Q(t-1)) \mathbb{E}_{a^\star \sim \pi^\star(\cdot|x_t)} g_t(x_t, a^\star),
 \end{eqnarray} 
where the target surrogate function $L_t^\star$ and the estimated surrogate function $\hat{L}_t$ have been defined in Eqns.\ \eqref{est-surr-cost} and \eqref{surr-cost-def} respectively. Next, following exactly the same derivation as in Section \ref{reg-decomp-sec}, we conclude the following generalized form of regret decomposition inequality
 \begin{eqnarray}  \label{new-reg-decomp}
	&&\mathbb{E}(\Phi(Q(t))) - \mathbb{E}(\Phi(Q(0))) + \mathbb{E} \textrm{Regret}_t (\pi^\star)  \leq 4 \sqrt{KU_Tt } + \nonumber \\ && \sum_{\tau=1}^t \mathbb{E}\Phi''\big[(Q(\tau))]\big)+ 4 \sqrt{KU_T}\mathbb{E} \sqrt{\sum_{\tau=1}^{t-1}\bigg([\Phi'(Q(\tau))]^2\bigg)} + \mathbb{E}[\Phi'(Q(t-1))] B_T.
\end{eqnarray}
While bounding the last term, we have used the fact that since the costs are non-negative, the sequence $\{\Phi'(Q(\tau))\}$ is non-decreasing and hence, we have almost surely 
\begin{eqnarray} \label{new-reg-decomp3}
 \sum_{\tau=1}^t \Phi'(Q(\tau-1)) \mathbb{E}_{a^\star \sim \pi^\star(\cdot|x_\tau)} c_\tau(x_\tau, a^\star) &\leq& \Phi'(Q(t-1)) \sum_{\tau=1}^t \mathbb{E}_{a^\star \sim \pi^\star(\cdot|x_\tau)} c_\tau(x_\tau, a^\star) \nonumber \\
 &\stackrel{(a)}{\leq}&  \Phi'(Q(t-1)) B_T, 
 \end{eqnarray}
where (a) follows from the long-term budget-feasibility of the benchmark policy $\pi^\star.$
Note that the only difference between Eqn.\ \eqref{new-reg-decomp} and the previous regret decomposition inequality \eqref{reg-decomp-ultimate} is the presence of the term involving budget $B_T$ in the former. Because of this formal similarity, the analysis follows a similar line to that in Section \ref{as-analysis}. 

Using the monotonicity of the sequence $\{Q(\tau)\}_{\tau}$ once again and choosing the Lyapunov function to be the exponential function, \emph{i.e.,} $\Phi(x)\equiv \exp(\lambda x),$ for some parameter $\lambda >0$ (to be fixed later), inequality \eqref{new-reg-decomp} simplifies to
\begin{eqnarray*}
	\mathbb{E}\exp(\lambda Q(T))- 1 + \mathbb{E} \textrm{Regret}_T (\pi^\star) &\leq& 4 \sqrt{KU_TT} + \lambda^2 T \mathbb{E}\exp(\lambda Q(T)) \\
    &&+ \lambda(4 \sqrt{KU_TT}+B_T)\mathbb{E}\exp(\lambda Q(T)). 
\end{eqnarray*}
Finally, choosing $\lambda = (8\sqrt{KU_T T} + 2B_T)^{-1},$
 we conclude
 \begin{eqnarray}\label{final-bd2-budget}
 	\mathbb{E}\exp(\lambda Q(T))- 1 + \mathbb{E} \textrm{Regret}_T (\pi^\star) \leq 4 \sqrt{KU_TT} + \frac{2}{3} \mathbb{E}\exp(\lambda Q(T)).
 \end{eqnarray}
 which yields
 \begin{eqnarray*}
 	\frac{1}{3}\mathbb{E}\exp(\lambda Q(T))- 1 + \mathbb{E} \textrm{Regret}_T (\pi^\star) \leq 4 \sqrt{KU_TT}.
 \end{eqnarray*}
 Since $Q(T) \geq 0,$ the above inequality immediately implies the following regret bound \[\mathbb{E} \textrm{Regret}_T (\pi^\star) \leq  4 \sqrt{KU_TT}+ \nicefrac{2}{3}.\]
 Finally, to bound the constraint violations (\CCV), note that since all cost vectors are upper bounded by unity, we have $\textrm{Regret}_T (\pi^\star) \geq -T$. Substituting this in \eqref{final-bd2-budget}, it follows that 
 \begin{eqnarray*}
 \frac{1}{3}\exp(\lambda \mathbb{E} Q(T)) \stackrel{\textrm{(Jensen's ineq.)}}{\leq}\frac{1}{3}\mathbb{E}\exp(\lambda Q(T)) \leq 1+ T + 4 \sqrt{KU_TT},
 \end{eqnarray*}
 which implies the following bound for the \CCV:
 \begin{eqnarray*}
 \mathbb{E}Q(T) = \tilde{O}(\sqrt{KTU_T}) + O(B_T\log T).
 \end{eqnarray*}
 \paragraph{Discussion:}
Our results improve upon the state-of-the-art results on \cbwk~on multiple fronts (see Table \ref{tab:comparison}). While \cite{slivkins2023contextual} assume 
(1) Stochastic contexts
(2) A positive and known slack $\zeta$ to the resource constraints \cite[Theorem 3.6]{slivkins2023contextual}, and (3) a Large budget regime where $B_T = \Omega(T)$, we remove all of these rather restrictive assumptions by considering (1) adversarial contexts, (2) no assumption on the slack, and (3) arbitrary budgets with a compact and transparent analysis, directly leveraging the seminal \SqCB~framework. 
\begin{remark} \label{exp-fn}
 \textbf{Intuition for the Exponential Lyapunov function:} On closer inspection of the above proof, it can be seen that the improved \Regret~and \CCV~guarantees are obtained by using an exponential Lyapunov function instead of the classical choice of a quadratic Lyapunov function. Although designing an appropriate Lyapunov function is more of an art, the regret decomposition inequality \eqref{new-reg-decomp} implicitly suggests the exponential Lyapunov function. From the proof, it is clear that for bounding regret in Eqn.\ \eqref{final-bd2-budget}, the $Q$-dependent terms must vanish (or must be non-positive). This suggests that the Lyapunov function $\Phi$ should be chosen such that $\mathbb{E}\Phi(Q(T)) \gtrapprox T \mathbb{E} \Phi''(Q(T)) + (B_T+A\sqrt{T})\mathbb{E}\Phi'(Q(T))$ holds for any $Q(T),$ where $A$ is an appropriate constant. By solving this linear differential equation, we arrive at the exponential Lyapunov function. 
\end{remark}

\subsection{Proof of Theorem \ref{thm:main} (f) (Contextual Bandits with Linear Constraints in the Stochastic Setting ($\mathsf{CBwLC}$))} \label{cbwlc-new}
In the $\mathsf{CBwLC}$ problem, introduced by \citet{slivkins2023contextual}, the contexts arrive in i.i.d.\ fashion.
In this problem, the stationary randomized benchmark policy $\pi^\star$ satisfies the budget constraint of $B_T$ in-expectation over the entire horizon, \emph{i.e.,}
\begin{eqnarray} \label{long-term-const}
	\mathbb{E}_{x_t \sim \mathbb{P}} \mathbb{E}_{a^* \sim \pi^*(\cdot|x_t)} \mathbb{E} \sum_{t=1}^T g_t(x_t, a^\star) \leq B_T.
\end{eqnarray}
Using the linearity of expectation, the i.i.d. nature of the contexts, and the stationarity of the benchmark $\pi^*,$ and the realizability assumption (Assumption \ref{assum:realizability}), Eqn.\ \eqref{long-term-const} implies that for any round $t \in [T],$ we have
\begin{eqnarray} \label{per-round-constraint}
	\mathbb{E} g^*(x_t, a^*) \leq \frac{B_T}{T} \equiv b ~(\text{say}),
\end{eqnarray}
where the expectation is taken with respect to both the context distribution and the randomness of the stationary randomized policy $\pi^*.$ The above equivalent condition enables us to define a new problem instance with a round-wise constraint $\mathbb{E} \bar{g}_t(x,a) \leq 0$ where the random cost incurred for round $t$ is defined as:
\begin{eqnarray} \label{q-bar-def}
	\bar{g}_t(x, a):= g_t(x, a) - b, ~~~ \forall (x, a).
\end{eqnarray}
Since $B_T = O(T),$ we trivially have $b=O(1)$, thus the new cost functions are uniformly bounded. From Eqn.\ \eqref{per-round-constraint}, it is clear that the stationary benchmark policy $\pi^\star$ is feasible in expectation for the new cost functions when the expectation is taken with respect to both the context distribution and the randomness of $\pi^\star$. Given the close similarity, we intend to use the results from part (a) of Theorem \ref{thm:main}, which is valid when the stationary policy is feasible in expectation (expectation taken only w.r.t.\ the actions) for \emph{each} context (which could be adversarially chosen). In the following, we argue that the same derivation goes through in the above i.i.d.\ $\mathsf{CBwLC}$ setting, even when the benchmark is feasible on every round only in expectation.  
 
 Using the i.i.d.\ nature of the contexts, it is easy to verify that the fundamental Regret Decomposition Inequality \eqref{reg-decomp-ultimate} remains valid when we take expectations over the contexts as well. The only change from the previous derivation is that, in the final step leading to \eqref{reg-decomp-ultimate}, we now take expectation over the context distribution as well. The only thing that is left to show is that the term $\mathbb{E}\big[\Phi'(Q(t-1))\bar{g}_t(x_t, a^\star)\big]$ is non-positive as in the previous case. To prove this,
define the filtration $\mathcal{F}_{t-1} = \sigma\big(\{x_{\tau}, f_{\tau}, g_\tau, a_{\tau}\}_{\tau=1}^{t-1}\big), t \geq  1.$ We have  
 \begin{eqnarray*}
 &&\mathbb{E}\big[\Phi'(Q(t-1))\bar{g}_t(x_t, a^\star)\big] \\
     &\stackrel{(a)}{=}& \mathbb{E} \mathbb{E}\big[\Phi'(Q(t-1))\bar{g}_t(x_t, a^\star)|\mathcal{F}_{t-1}\big]\\
     &=& \mathbb{E}\bigg[\Phi'(Q(t-1)) \mathbb{E}\big[\bar{g}_t(x_t, a^\star)|\mathcal{F}_{t-1}\big]\bigg] \\
      &\stackrel{\text{(b)}}{=}&  \mathbb{E}\bigg[\Phi'(Q(t-1)) \mathbb{E}\big[\bar{g}_t(x_t, a^\star)\big]\bigg] \\
      &\stackrel{(c)}{\leq} & 0,
  \end{eqnarray*}
  where (a) follows from the tower property of conditional expectation and (b) follows from the i.i.d. nature of the contexts, and (c) follows from the in-expectation feasibility property with respect to the cost $\bar{g}$ and the convexity of the Lyapunov function $\Phi(\cdot).$
  
Since the bound in part (a) of Theorem \ref{thm:main} follows entirely from the regret decomposition inequality \eqref{reg-decomp-ultimate}, we immediately obtain the following bounds with the modified cost functions:
 \begin{eqnarray*}
 	       \mathbb{E}\mathsf{Regret}_T = \mathcal{O}(\sqrt{K}T^{3/4}U_T^{1/4}), ~~
        \mathbb{E} Q(T) =  \mathcal{O}(\sqrt{K}T^{3/4}U_T^{1/4}).
 \end{eqnarray*}
Finally, note that
 \begin{eqnarray*}
 	Q(T) \stackrel{(a)}{=} \sum_{t=1}^T \bar{g}_t(x_t, a_t) \stackrel{(b)}{=} \sum_{t=1}^T (g_t(x_t, a_t) - b) =   \mathsf{Cost}_T - B_T \equiv \overline{\mathsf{CCV_T}}, 
 \end{eqnarray*}
 where inequality (a) follows from Eqn.\ \eqref{q-recur} and (b) follows from Eqn.\ \eqref{q-bar-def}.

\subsection{Extension to Multiple Resources}
\label{subsec:multiple_resources}
To enable the analysis with $m$ resources, we would need to define multiple virtual queues $Q_i$ for each resource $i$ and a new surrogate reward function that accounts for all resources.
\begin{align}
    Q_i(t) = Q_i(t-1) + g_{t,i}(x_t,a_t).
\end{align}
We also define a new form of the surrogate reward function
\begin{eqnarray} \label{est-surr-cost-multiple}
	L^\star_t(x_t, a) = f^\star(x_t,a) - \sum_{i=1}^m\Phi'(Q_i(t-1)) g_i^\star(x_t,a), ~~ a \in [K], 
\end{eqnarray}
and the \emph{estimated surrogate} function as:
\begin{eqnarray} \label{surr-cost-def-multiple}
	\hat{L}_t(x_t, a) = \hat{f}_t(x_t,a) - \sum_{i=1}^m \Phi'(Q_i(t-1)) \hat{g}_{t,i}(x_t,a), ~~ a \in [K]. 
\end{eqnarray}
The rest of the analysis would be similar to the one in a single resource. 

\section{The Hard-stopping Setting} \label{hard-stopping}

In parts (d) and (f) of Theorem \ref{thm:main}, we considered the continuing setting with a long-term budget constraint. In this setting, even after the budget was exhausted, the game continues, and the learner continues to incur costs. In this setting, we are interested in bounding both the regret and the cumulative cost. In this section, we consider a related setting where the learner stops the moment the budget is exhausted. This setting is known as hard-stopping in the literature, and it implicitly assumes the existence of a NULL arm—one with zero cost and zero reward. Prior works, such as \citet{guo2024stochastic}, have also provided machinery to convert bounds in the continuing setting to those in the hard-stopping setting. However, their technique relies crucially on the stochastic nature of contexts. Instead, we proceed with a scaling argument that is robust to adversarially chosen contexts. Crucially, we use both multiplicative and additive scaling. The additive scaling is useful for the $\mathsf{CBwLC}$ problem, where the contexts are also assumed to be iid.

Specifically, we strategically reduce the prescribed budget for the online policy to a tighter threshold, denoted as $B_{T}^{\prime}$. This artificial reduction ensures that the policy remains strictly feasible and does not exceed the true budget $B_{T}$ by the end of the horizon with high probability. To account for the weakening of the comparator caused by this tighter constraint, we quantify how the budget reduction impacts the cumulative reward of the optimal offline benchmark policy. We achieve this by first formulating the linear program that defines $\pi^{*}$ under the reduced budget $B_T'$ and then using the dual solution of the same to upper bound the optimal value under the original budget $B_T$.

\begin{theorem}[Regret Bounds with Hard-Stopping]
\label{cor:hard_stopping}
Consider the hard-stopping setting where the algorithm terminates once the budget $B_T$ is exhausted. Assuming the existence of a strictly feasible NULL arm having zero cost and zero reward for any context, Algorithm 2 achieves the following regret bounds by operating with a reduced virtual budget $B_{T}^{\prime}$:

\begin{enumerate}
    \item \textbf{CBwK (Non-negative costs):} Consider adversarially generated contexts with non-negative costs. In the regime where $B_T = \Omega(\sqrt{TU_T})$, running the algorithm with a multiplicatively scaled budget $B_T' = \Theta(B_T/\log T)$ ensures the budget constraint is met up to the end of the horizon. This yields the following $O(\log T)$-approximate regret bound:
    \begin{equation}
        \mathsf{OPT}(B_T) - \mathcal{O}(\log T)\mathsf{ALG}(B_T') = \tilde{\mathcal{O}}(\sqrt{KTU_T})
    \end{equation}
    
    \item \textbf{CBwLC (Stochastic Contexts):} Consider stochastically generated constraints with arbitrary signed costs. Running the algorithm with an additively reduced budget $B_T' = B_T - A_T$ where $A_T = \mathcal{O}(\sqrt{K}T^{3/4}U_T^{1/4})$ ensures that the budget constraint is met up to the end of the horizon. This yields the following regret bound:
    \begin{equation}
        \mathsf{OPT}(B_T) - \mathsf{ALG}(B_T') = \mathcal{O}(\sqrt{K}T^{3/4}U_T^{1/4})
    \end{equation}
\end{enumerate}
\end{theorem}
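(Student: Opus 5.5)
The plan is to reduce both parts to the continuing-setting guarantees of Theorem~\ref{thm:main}. In each case we run Algorithm~\ref{ccb2} with a virtual budget $B_T'$ that is smaller than $B_T$, and we stop only if the true budget is exhausted. The argument then has two ingredients. First, the \CCV\ bound for the virtual budget must imply that the true budget $B_T$ is, with high probability, never exhausted before round $T$. In that event the hard-stopping trajectory coincides with the continuing one, and the reward collected equals $\mathsf{ALG}(B_T')$. Second, we compare $\mathsf{OPT}(B_T')$ with $\mathsf{OPT}(B_T)$ through the linear program that defines the stationary benchmark. Here the NULL arm guarantees strict feasibility, so strong duality holds and there is a finite dual variable.

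\textbf{Part 1 (CBwK).} Costs are non-negative, so $Q(t)$ is non-decreasing and equals the cumulative cost minus nothing, i.e.\ the consumption itself. Theorem~\ref{thm:main}(e) applied with budget $B_T'$ gives $\mathbb{E}\,\mathsf{CC}_T = \tilde O(\sqrt{KTU_T}) + O(B_T'\log T)$. To get a high-probability rather than in-expectation statement, we use the exponential moment bound $\mathbb{E}\exp(\lambda Q(T)) \le 3(1+T+4\sqrt{KU_TT})$ from the proof of part (e), together with Markov's inequality. This shows that $Q(T) \le \lambda^{-1}\log(T^2) = O\bigl((\sqrt{KTU_T}+B_T')\log T\bigr)$ except with probability $1/T$.

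Taking $B_T' = c B_T/\log T$ for a small constant $c$, and using $B_T = \Omega(\sqrt{TU_T})$ to absorb the $\sqrt{KTU_T}\log T$ term (up to the hidden logs), makes this at most $B_T$. On the failure event we lose at most $T\cdot(1/T) = O(1)$ reward in expectation.

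For the benchmark side, the stationary LP value is concave in the budget, and scaling the optimal policy for budget $B_T$ by $B_T'/B_T$, mixed with the NULL arm, is feasible for budget $B_T'$. This gives $\mathsf{OPT}(B_T') \ge (B_T'/B_T)\,\mathsf{OPT}(B_T)$, i.e.\ $\mathsf{OPT}(B_T) \le O(\log T)\,\mathsf{OPT}(B_T')$. Combining this with the regret bound $\mathsf{OPT}(B_T') - \mathsf{ALG}(B_T') \le O(\sqrt{KU_TT})$ from part (e) yields $\mathsf{OPT}(B_T) - O(\log T)\,\mathsf{ALG}(B_T') = \tilde O(\sqrt{KTU_T})$, since the $\log T$ factor multiplies the regret term as well.

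\textbf{Part 2 (CBwLC).} Here we use additive reduction. Theorem~\ref{thm:main}(f) with budget $B_T' = B_T - A_T$ controls $\mathbb{E}Q(T)$. By the remark on the Lindley recursion, $Q(t)$ dominates the excess cost over every suffix; we need the analogous domination for prefixes. Since the reduced process starts at $Q(0)=0$, we have $\sum_{\tau\le t}\bar g_\tau \le Q(t)$ for every $t$, so it suffices to control $\max_t Q(t)$. We will use the part (a) argument, which bounds $\sum_t \mathbb{E}Q^2(t)$ via $R(T)$, together with a martingale/Doob-type maximal bound (or simply Markov applied to $\mathbb{E}Q^2(t)$ with a union bound) to show that $\max_t Q(t) \le A_T$ with $A_T = O(\sqrt{K}T^{3/4}U_T^{1/4})$ up to constants and logs.

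For the benchmark, let $y^\star \ge 0$ be the optimal dual variable of the LP with budget $B_T$. Then $\mathsf{OPT}(B_T) - \mathsf{OPT}(B_T - A_T) \le y^\star A_T$. Strict feasibility of the NULL arm gives $y^\star = O(1)$: it is bounded by the reward range divided by the slack, treated as a problem constant. Adding the regret bound $O(\sqrt K T^{3/4}U_T^{1/4})$ from part (f) then completes this part.

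\textbf{Main obstacle.} The hardest step is the transfer from expected \CCV\ to \emph{anytime, high-probability} feasibility, especially for signed costs in Part~2, where $Q$ is not monotone. The paper's bounds are stated for $\mathbb{E}Q(T)$ only. I would first try to derive the needed tail control directly from the Lyapunov drift inequality~\eqref{reg-decomp-ultimate}, applied at every $t$ and combined with a union bound over $t$, accepting the extra logarithmic factors. If that fails, I would absorb the probability of early stopping into the regret, using the fact that rewards are bounded by $1$ per round.
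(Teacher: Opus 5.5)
Your Part~1 is sound and follows the paper's multiplicative-scaling route. It is in fact more careful than the paper's write-up. The exponential-moment bound $\mathbb{E}e^{\lambda Q(T)}\le 3(1+T+4\sqrt{KU_TT})$ from the proof of part~(e), combined with Markov's inequality and the monotonicity of $Q$ under non-negative costs, gives genuine anytime, high-probability feasibility. The paper only asserts feasibility in expectation, with a loosely invoked Azuma term. Your primal argument (mix the budget-$B_T$ optimum with the NULL arm) gives $\mathsf{OPT}(B_T)\le (B_T/B_T')\,\mathsf{OPT}(B_T')$ without duality and is equivalent to the paper's dual Case~I. Both you and the paper are loose about the regime: absorbing the $\sqrt{KTU_T}\log T$ term really needs $B_T=\Omega(\sqrt{KTU_T}\log T)$, not just $B_T=\Omega(\sqrt{TU_T})$. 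Part~2, however, has two genuine gaps.

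\textbf{Wrong-direction dual bound.} If $y^\star$ is the optimal dual at budget $B_T$, it is a supergradient of the concave map $B\mapsto\mathsf{OPT}(B)$ at $B_T$. That gives $\mathsf{OPT}(B_T)-\mathsf{OPT}(B_T-A_T)\ge y^\star A_T$, which is a \emph{lower} bound on the loss. For example, if the budget is slack at $B_T$ then $y^\star=0$, yet cutting the budget by $A_T$ can strictly decrease $\mathsf{OPT}$. You need the dual optimum $(\lambda^\star,\mu^\star)$ at the \emph{reduced} budget $B_T'$. It remains dual-feasible when the budget is raised to $B_T$, so weak duality gives $\mathsf{OPT}(B_T)\le\mathsf{OPT}(B_T')+\lambda^\star A_T$; this is the paper's Case~II. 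Your ``reward range over slack'' estimate for this multiplier is then $\lambda^\star\le\mathsf{OPT}(B_T')/B_T'\le T/B_T'$, because the slack the NULL arm provides in the aggregate LP is $B_T'$ itself. This is $O(1)$ only if $B_T'=\Omega(T)$. Your phrase ``treated as a problem constant'' hides that assumption, and the paper relies on it implicitly too.

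\textbf{Anytime feasibility for signed costs.} The route you propose cannot work as stated. Part~(a) controls only second moments: $\sum_t\mathbb{E}Q^2(t)=R(T)^2=O(VT^2)$ with $V=\sqrt{KTU_T}$, or $\mathbb{E}Q^2(t)=O(VT)$ for each $t$. Markov plus a union bound therefore gives $\Pr(\max_t Q(t)>A)\le O(VT^2)/A^2$. This is below one only for $A=\Omega\big((KU_T)^{1/4}T^{5/4}\big)$, which exceeds the trivial bound $Q(t)\le 2T$ and is vacuous. Doob's inequality does not apply to $Q$, which is not a submartingale. The drift inequality controls the relevant compensator only in expectation, which by itself yields only Markov-type tails. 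Your fallback of charging early stopping to the regret costs of order $T\Pr(\text{early stop})$. Keeping that at $O(\sqrt{K}T^{3/4}U_T^{1/4})$ requires $\Pr(\max_tQ(t)>A_T)=O(T^{-1/4})$, which is exactly the missing tail bound. The paper does not prove this either. It takes $A_T$ to be the terminal expected-\textsf{CCV} bound from Theorem~\ref{thm:main}(f), plus an $O(\sqrt T)$ concentration term, and concludes only that the budget is met in expectation at the end of the horizon. To close Part~2 along the paper's lines, settle for that weaker guarantee; the anytime high-probability version you are aiming for would need a genuinely new tail argument.
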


The proof of the above results is given in Appendix \ref{hard-stopping}.

\begin{remark}[Role of Stochasticity]
\label{rem:stochastic_vs_adversarial}
It is worth emphasizing that while our framework accommodates adversarially chosen contexts, the scaling argument used to derive the hard-stopping bounds relies critically on the realizability assumption and the stochastic nature of the rewards and costs. Specifically, conditioned on a context $x_t$, the expected rewards and costs are governed by fixed albeit unknown functions $f^*$ and $g^*$. This static mapping allows us to characterize the optimal stationary benchmark $\pi^*$ using a single, global Linear Program defined over the empirical frequencies of the contexts. If the rewards and costs were fully adversarial---meaning an adversary could arbitrarily shift the underlying reward and cost structures at each round---such a static LP formulation would not be possible. Consequently, the dual variables (such as $\lambda^*$) that naturally capture the global trade-off between resource consumption and reward accumulation would lack a well-defined global optimal value, rendering this primal-dual reduction intractable.
\end{remark}
%
\begin{proof}
We will proceed with a scaling argument, where we strategically reduce the prescribed budget for the online policy to $B_T',$ so that it remains feasible, \emph{i.e.,} incurs cumulative cost at most $B_T$, even at the end of the horizon w.h.p. In particular, we set $B_T'$ by solving the equation:
\[ B_T = \mathsf{CCV}(B_T') + O(\sqrt{T}),\]
where $\mathsf{CCV}(B_T')$ denotes an upper bound to the \CCV, when the online policy is run with a reduced budget of $B_T'$ (see Theorem \ref{thm:main} for expressions for \CCV~bounds for various budgeted problems) and the $O(\sqrt{T})$ term is due to the standard Martingale concentration bound from the Azuma-Hoeffding inequality applied to the cumulative costs. 
By definition, with the reduced budget of $B_T'$, the online policy can continue for the entire horizon of length $T$ before it consumes the allocated budget of $B_T$ w.h.p. To obtain the regret bound, which is the difference between the cumulative rewards of the benchmark and the online policy, we now need to investigate how increasing the budget from $B_T'$ to $B_T$ changes the cumulative reward of the offline stationary randomized benchmark policy $\pi^\star$. In the following analysis, we derive this bound in terms of the dual solution to the LP defining $\pi^*.$
\paragraph{Analysis:}

Consider the optimal stationary benchmark policy $\pi^\star(a|x)$ with a budget of $B_T',$ same as that of the online policy. The cumulative reward of $\pi^*,$ denoted by $\mathsf{OPT}(B_T'),$ is given by the solution to the following LP:

\begin{eqnarray*}
	\mathcal{P}: ~~~\max \sum_{t=1}^T \sum_a \pi (a|x_t) f^*(x_t, a)
\end{eqnarray*}
Subject to 
\begin{eqnarray}
	\sum_{t=1}^T \sum_a \pi (a|x_t) g^*(x_t, a) &\leq& B_T'  \label{budget}\\
	\sum_a \pi(a|x)&\leq& 1, ~\forall x.  \label{feas}\\
	 \pi(a|x) &\geq& 0, ~ \forall a, x.
\end{eqnarray}
Note that in writing down the constraint \eqref{feas}, we implicitly assume the existence of a NULL arm having zero cost and zero consumption for any context. Let the optimal value of the above LP be $\nu^\star_{B_T'}.$ 


Let us now consider the dual of $\mathcal{P}$. Associating a dual variable $\lambda \geq 0 $ to the constraint \eqref{budget} and the dual variable $\mu_{x} \geq 0$ to the constraint \eqref{feas}, we can write down the following dual LP $\mathcal{D}:$
\begin{eqnarray} \label{dual_LP}
	\mathcal{D}: ~~~ \min~~ \lambda B_T' + \sum_x \mu_x
\end{eqnarray}
Subject to
\begin{eqnarray*}
	\lambda g^*(x,a) N_T(x) + \mu_x &\geq& N_T(x) f^*(x,a), ~~ \forall (x,a)\\
	\lambda \geq 0, ~ \mu_x &\geq& 0, \forall x,
\end{eqnarray*}
 where $N_T(x) = \sum_{t=1}^T \mathds{1}(x_t=x)$ is the number of times the context $x \in \mathcal{X}$ appears in the entire time horizon. 
 
 Let an optimal solution to the dual LP be $(\lambda^\star, \bm{\mu}^\star).$ By strong duality, we have 
 \begin{eqnarray} \label{str-duality}
       \mathsf{OPT}(B_T')= \nu^\star_{B_T'} = \lambda^\star B_T' + \sum_x \mu^\star_x.
 \end{eqnarray}

 Now consider a stationary randomized policy with the original budget of $B_T \geq B_T'.$ Let us denote its optimal objective value by $\nu^\star_{B_T}.$ We now seek to upper bound $\nu^\star_{B_T}$ in terms of $\nu^\star_{B_T'}.$ 
  
  Note that the previous optimal solution $(\lambda^\star, \bm{\mu}^\star)$ (for the budget constraint $B_T'$) is still a feasible solution to the dual of the modified LP with budget constraint of $B_T$. Hence, the optimal value of the modified LP can be upper-bounded as
  \begin{eqnarray} \label{dual-bd}
  	\mathsf{OPT}(B_T)= \nu^\star_{B_T} \leq   \lambda^\star  B_T + \sum_x \mu^\star_x.
  \end{eqnarray}
We now consider two different applications of the above bound, which will be used for proving regret bounds for the hard-stopping case.
\paragraph{Case I (Multiplicative Scaling): $B_T=cB_T'$ for some $c\geq 1.$}
In this case, we have 
\begin{eqnarray} \label{mult_scaling}
   \mathsf{OPT}(B_T) \leq   \lambda^\star  B_T + \sum_x \mu^\star_x  \stackrel{(a)}{\leq} c (\lambda^\star  B_T + \sum_x \mu^\star_x) = c \nu^\star_{B_T'} = c  \mathsf{OPT}(B_T').
\end{eqnarray}
  where, in (a), we have used the fact that $\bm{\mu}^\star \geq 0$ and $c \geq 1.$
\paragraph{Case II (Additive Scaling): $B_T= B_T' + A_T,$ for some $A_T \geq 0$:} In this case, we have  
\begin{eqnarray} \label{additive_scaling}
    \mathsf{OPT}(B_T) \leq   \lambda^\star  B_T + \sum_x \mu^\star_x = \lambda^\star  B_T' + \sum_x \mu^\star_x + \lambda^\star A_T = \mathsf{OPT}(B_T') + \lambda^* A_T.
\end{eqnarray}
  
  \subsection{Regret Bound with Hard-Stopping for $\mathsf{CBwK}$}
  We consider regime where $B_T = \Omega(\sqrt{TU_T}).$ From the bound in Theorem \ref{thm:main} (part (d)), the online algorithm is run with a reduced budget of $B_T',$ where $B_T= O(\sqrt{TU_T}+ B'_T \log T) = O(B_T' \log T),$ \emph{i.e.,} $B_T' = \Theta (B_T/\log T)$ Then, from the above discussion, it follows that the online algorithm satisfies the prescribed budget constraint in expectation over the entire horizon of length $T$. 
  

  Let $\mathsf{OPT}(B_T')$ (resp. $\mathsf{ALG}(B_T')$) be the cumulative reward of the offline benchmark (resp. online algorithm) for the reduced budget of $B_T'.$ From Theorem \ref{thm:main} (part (d)), we have the following terminal regret bound
  \begin{eqnarray}
      \mathsf{OPT}(B_T') - \mathsf{ALG}(B_T') = O(\sqrt{KTU_T}).
  \end{eqnarray}
  Furthermore, using Eqn.\ \eqref{mult_scaling} with $c= \Theta(\log T)$, we have that $\mathsf{OPT}(B_T) \leq O(\log T) \mathsf{OPT}(B_T').$ Combining this bound with the above, we obtain the following approximate regret bound:
  \begin{eqnarray}
      \mathsf{OPT}(B_T) - O(\log T)\mathsf{ALG}(B_T') = \tilde{O}(\sqrt{KTU_T}). 
  \end{eqnarray}
  \subsection{Regret Bound with Hard-Stopping for $\mathsf{CBwLC}$}
  From Theorem \ref{thm:main}, part (f), we have the following \CCV~bound for the $\mathsf{CBwLC}$ problem. 
  \[ \mathbb{E}\mathsf{CCV}_T = O(\sqrt{K}T^{\nicefrac{3}{4}}U_T^{\nicefrac{1}{4}}).\] 
We run the online algorithm with an (additively) reduced budget of $B_T' = B_T - A_T$ with $A_T= O(\sqrt{K}T^{3/4}U_T^{1/4}).$  Hence, from the above discussion, it follows that the online algorithm satisfies the prescribed budget constraint in expectation over the entire horizon of length $T$. Let $\mathsf{OPT}(B_T')$ (resp. $\mathsf{ALG}(B_T')$) be the cumulative reward of the offline benchmark (resp. online algorithm) for the reduced budget of $B_T'.$ From Theorem \ref{thm:main} (part (f)), we have the following terminal regret bound
  \begin{eqnarray}
      \mathsf{OPT}(B_T') - \mathsf{ALG}(B_T') = O(\sqrt{K}T^{\nicefrac{3}{4}}U_T^{\nicefrac{1}{4}}).
  \end{eqnarray}
  Using Eqn.\ \eqref{additive_scaling}, we conclude that $\mathsf{OPT}(B_T) \leq \mathsf{OPT}(B_T') + O(\lambda^* \sqrt{K}T^{\nicefrac{3}{4}}U_T^{\nicefrac{1}{4}}).$

  Finally, we argue that for $B_T \geq A_T,$ we have $\lambda^\star = O(1). $ To see this, notice that, from Eqn.\ \eqref{str-duality}, we have 
  \begin{eqnarray}
            \nu^\star_{B_T'} \stackrel{(a)}= \lambda^*B_T' + \sum_x \mu_x^* \stackrel{(b)}{\geq} \lambda^* B_T'. 
  \end{eqnarray}
  where (a) follows from Eqn.\ \eqref{str-duality} and (b) follows from the non-negativity of the dual variables $\bm{\mu} ^*$. Thus $\lambda^* \leq \frac{ \nu^\star_{B_T'}}{B_T'} = O(1),$ assuming the primal is feasible. Combining the above results, we obtain a regret bound of $O(\sqrt{K}T^{\nicefrac{3}{4}}U_T^{\nicefrac{1}{4}})$ for the hard-stopping case in $\mathsf{CBwLC}.$ Our result recovers the bounds in \citet[Theorem 3.6 (b)]{slivkins2023contextual}, via an arguably simpler algorithm. 
  \end{proof}

\section{Lower Bound to Competitive Ratio}
\label{pf-comp-ratio}
\subsection{Hard-Stopping Setting}
The proof closely follows the arguments of \citep[Theorem 8.1 (b)]{immorlica2022adversarial}, who established a similar lower bound for the non-contextual $\mathsf{BwK}$ problem. Interestingly, we will see that the proof goes through even when the online policy satisfies the budget constraint \emph{in expectation}. 

Let $T$ be the time horizon, $B_T$ be the budget, and let $L = \lfloor T/B_T \rfloor$. We construct an environment with two arms: a safe arm $a_0$ with zero cost and zero reward, and a risky arm $a_1$ with a deterministic cost of $1$ in all rounds. The adversary divides the time horizon into $L$ sequential phases of length $B_T$. We define a context space $\mathcal{X} = \{x_1, x_2, \dots, x_L\}$ and restrict the context arrivals such that context $x_l$ is presented exclusively during phase $l$. To implement the adversarial trap within the realizable stochastic setting, the adversary defines a hypothesis class of reward functions $\mathcal{F} = \{f_1, f_2, \dots, f_L\}$ and secretly selects one function $f_\tau \in \mathcal{F}$ as the ground truth. Under $f_\tau$, the expected reward for $a_1$ is set to $l \cdot \epsilon$ when context $x_l$ is presented for all $l \le \tau$, and drops permanently to $0$ for all contexts $x_l$ where $l > \tau$, where $\epsilon = B_T/T$. The optimal offline policy $\pi^* \in \Pi$ operates with full knowledge of the true function $f_\tau$. Because the hard-stopping constraint strictly limits the algorithm to a total of $B_T$ resource consumptions, $\pi^*$ simply abstains from playing $a_1$ until phase $\tau$ (context $x_\tau$), at which point it spends its entire budget to obtain an optimal expected reward equal to $B_T \cdot (\tau \cdot \epsilon)$. Conversely, any randomized online algorithm faces a sequence of indistinguishable environments up to phase $\tau$. Let $\alpha_l$ denote the expected budget spent by the algorithm by playing $a_1$ during phase $l$. Because the outcome matrices are completely identical in the first $\tau$ phases across all true functions $f_{\ge \tau}$, the expected consumption $\alpha_l$ must be identical across all these instances. Thus, the algorithm's total expected reward under $f_\tau$ is given by $\sum_{l=1}^\tau (l \cdot \epsilon) \alpha_l$.

To bound the competitive ratio against the worst-case choice of $f_\tau$, we maximize over all possible $\tau \in \{1, \dots, L\}$ the ratio of the optimal offline reward to the algorithm's expected reward, which is equal to $\max_{\tau} \frac{\tau}{\sum_{l=1}^\tau l \alpha_l}$. By the complementary slackness condition, this maximum ratio is minimized when the expression $\frac{\sum_{l=1}^\tau l \alpha_l}{\tau}$ is equal across all $\tau$. Solving this equalization yields the recurrence $\alpha_l = \frac{\alpha_1}{l}$ for $l \ge 2$. The algorithm is subject to the hard-stopping budget constraint $\sum_{l=1}^L \alpha_l \le B_T$. Substituting the recurrence into this constraint gives $\alpha_1 \sum_{l=1}^L \frac{1}{l} \le B_T$, which simplifies to $\alpha_1 \frac{H(L)}{2} \le B_T$, where $H(L)$ is the $L$-th Harmonic number. Evaluating the competitive ratio at $\tau=1$ yields a value proportional to $B_T / \alpha_1$. Applying the budget inequality, we conclude this ratio is at least $H(L)$, which strictly bounds the competitive ratio from below by $\Omega(\log(T/B_T))$.

\subsection{Continuing Setting}

The new idea in this proof is to extend the above lower bound argument from the hard-stopping to the continuing setting via a simple reduction. We will consider the budget $B_T = \sqrt{2} \max(\sqrt{Th(T)}, s(T))$ in our lower bound argument below. 

Let the context space be $\mathcal{X} = \{x_1, \dots, x_L\}$, where $L = \lfloor T/B_T \rfloor$. The adversary presents these contexts in sequential blocks, such that context $x_l$ is exclusively observed during phase $l$ of length $B_T$. The learner chooses between a safe arm $a_0$ (yielding deterministic $0$ reward and $0$ cost) and a risky arm $a_1$ (incurring a deterministic cost of $1$). To enforce the indistinguishability trap within the contextual realizability framework, the adversary defines a hypothesis class of reward functions $\mathcal{F} = \{f_1, \dots, f_L\}$. The adversary secretly selects one function $f_\tau \in \mathcal{F}$ as the ground truth. Under $f_\tau$, the expected reward for $a_1$ when context $x_l$ is presented is set to $l \cdot (B_T/T)$ for all $l \le \tau$, and drops permanently to $0$ for all contexts where $l > \tau$.The optimal offline policy $\pi^* \in \Pi$ knows the true function $f_\tau$ and the context sequence. Because $\pi^*$ is strictly budget-feasible, it abstains from playing $a_1$ until context $x_\tau$ arrives, at which point it exhausts its budget $B_T$ to achieve the optimal reward $\mathsf{OPT}_T \ge B_T^2 / T$. We now use the continuing policy $\pi$ to construct a modified, strictly feasible policy $\pi'$. At each round $t$, $\pi'$ observes the context $x_t$, queries the original policy $\pi$, and if $\pi$ chooses the risky arm $a_1$, $\pi'$ plays $a_1$ with a scaled-down probability of $1/\eta(T)$, where $\eta(T) = \kappa + \frac{s(T)}{B_T}$. If $\pi$ chooses the safe arm $a_0$, $\pi'$ also plays $a_0$. Because the expected rewards and costs are linear with respect to the probability of pulling the risky arm, the expected cumulative consumption of the modified policy scales exactly by the factor $\eta(T)$. Using the violation bound of the original policy $\pi$, the expected consumption of $\pi'$ is strictly bounded: $\mathbb{E}[\mathsf{CCV}_T(\pi')] \le \frac{\kappa B_T + s(T)}{\kappa + s(T)/B_T} = B_T$. This proves that the policy $\pi'$ satisfies the budget constraint in expectation (similar to the hard-stopping regime described above). Similarly, the expected reward of the modified policy scales down proportionally: $\mathbb{E}[\mathsf{REW}_T(\pi')] = \frac{1}{\eta(T)} \mathbb{E}[\mathsf{REW}_T(\pi)]$. Applying the sublinear regret guarantee of $\pi$, we have 
\begin{eqnarray} \label{reward-ineq}
\mathbb{E}[\mathsf{REW}_T(\pi')] \ge \frac{\mathsf{OPT}_T - h(T)}{\eta(T)}.	
\end{eqnarray}
Assuming the budget is sufficiently large such that $B_T \ge \sqrt{2Th(T)}$, we have that 
\begin{eqnarray*}
	\mathsf{OPT}_T - h(T) = \mathsf{OPT}_T\big(1- \frac{h(T)}{\mathsf{OPT}_T}\big) \stackrel{(a)}{\geq} \mathsf{OPT}_T\big(1- \frac{Th(T)}{B_T^2}\big) \geq \mathsf{OPT}_T\big(1- \frac{Th(T)}{2Th(T)}\big) \geq \mathsf{OPT}_T/2,
\end{eqnarray*}
where (a) follows from the fact that $\mathsf{OPT}_T \geq B_T^2 / T$.
 Substituting this into the reward inequality \eqref{reward-ineq} and rearranging gives $\kappa + \frac{s(T)}{B_T} \ge \frac{\mathsf{OPT}_T}{2 \mathbb{E}[\mathsf{REW}_T(\pi')]}$. However, as argued in the hard-stopping case above, the fundamental information-theoretic lower bound dictates that for any budget-feasible policy $\pi'$ operating over this indistinguishable sequence of contextual epochs, there exists at least one true function $f_\tau \in \mathcal{F}$ where the competitive ratio $\mathsf{OPT}_T / \mathbb{E}[\mathsf{REW}_T(\pi')] \ge \Omega(\log(T/B_T))$ (see \citep[2022, Theorem 8.1, part (b) and Lemma 8.6]{immorlica2022adversarial} or the proof for the hard-stopping case above). Since $B_T \geq \sqrt{2} s(T)$, it follows that the multiplicative factor $\kappa$ is lower-bounded by $\Omega(\log(T/B_T))$.
\section{Auxiliary Lemmas}
\begin{lemma}
\label{lemma:quadratic}
    If $x^2 \leq ax + b$ with $a,b \geq 0$ then it implies that $x \leq a + \sqrt{b}$.
\end{lemma}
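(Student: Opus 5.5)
The plan is a case split on the sign of $x$ followed by the quadratic formula. If $x \le 0$, then $x \le 0 \le a + \sqrt{b}$ immediately, because $a,b \ge 0$. So the only case that needs work is $x > 0$.

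For $x>0$, rewrite the hypothesis as $x^2 - ax - b \le 0$. The quadratic $q(y) = y^2 - ay - b$ opens upward and has real roots
\[
y_\pm = \frac{a \pm \sqrt{a^2+4b}}{2}.
\]
Since $q(x)\le 0$, $x$ must lie between them, and in particular
\[
x \le \frac{a+\sqrt{a^2+4b}}{2}.
\]

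It then remains to show $\sqrt{a^2+4b} \le a + 2\sqrt{b}$. Both sides are nonnegative, so it suffices to compare their squares:
\[
(a+2\sqrt b)^2 = a^2 + 4a\sqrt b + 4b \ge a^2 + 4b,
\]
which holds because $a\sqrt b \ge 0$.

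Substituting this back gives
\[
x \le \frac{a + a + 2\sqrt b}{2} = a + \sqrt b,
\]
which is the claim.

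There is no real obstacle here. The only points needing care are handling the sign of $x$ and using $a,b\ge 0$ when taking square roots. Alternatively, for $x>0$ one can argue by contradiction: if $x > a+\sqrt b$, then $x^2 > x(a+\sqrt b) \ge ax + \sqrt b\cdot\sqrt b = ax + b$, where the middle step uses $x \ge \sqrt b$. This contradicts the hypothesis.
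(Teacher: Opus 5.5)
Your proof is correct and follows essentially the same route as the paper. The paper completes the square to get $x \le \frac{a}{2} + \sqrt{\frac{a^2}{4}+b}$, which is your larger-root bound, and then uses $\sqrt{u+v}\le\sqrt{u}+\sqrt{v}$, which is exactly your comparison $\sqrt{a^2+4b}\le a+2\sqrt{b}$. Your case split on the sign of $x$ is harmless but unnecessary, since $q(x)\le 0$ already forces $x\le y_+$ for every real $x$.
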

\begin{proof}
    \begin{eqnarray*}
        x^2 \leq ax + b \implies \bigg(x-\frac{a}{2}\bigg)^2 \leq \frac{a^2}{4} + b \implies x-\frac{a}{2} \leq \sqrt{\frac{a^2}{4} + b} \implies x \leq \frac{a}{2} + \sqrt{\frac{a^2}{4} + b}
    \end{eqnarray*}

Further note that,
\begin{eqnarray*}
    \sqrt{\frac{a^2}{4} + b} \leq \frac{a}{2} + \sqrt{b}
\end{eqnarray*}
as $\sqrt{x + y} \leq \sqrt{x} + \sqrt{y},$ for $x,y \geq 0$.
Putting everything together, we get,
\begin{eqnarray*}
    x^2 \leq ax + b \implies x \leq a + \sqrt{b}.
\end{eqnarray*}
\end{proof}
\end{document}